\crefname{section}{Sec.}{Secs.}
\Crefname{section}{Section}{Sections}
\Crefname{table}{Table}{Tables}
\crefname{table}{Tab.}{Tabs.}
\begin{document}
\pagestyle{headings}
\mainmatter
\def\ECCVSubNumber{1670}  

\title{KVT: $k$-NN Attention for Boosting Vision Transformers} 

\titlerunning{ECCV-22 submission ID \ECCVSubNumber} 
\authorrunning{ECCV-22 submission ID \ECCVSubNumber} 
\author{Anonymous ECCV submission}
\institute{Paper ID \ECCVSubNumber}

\titlerunning{KVT: $k$-NN Attention for Boosting Vision Transformers}
%
\newcommand*\samethanks[1][\value{footnote}]{\footnotemark[#1]}
\authorrunning{Pichao Wang et al.}
\author{Pichao Wang\thanks{The first two authors contribute equally.} \and
Xue Wang\samethanks \and
Fan Wang \and
Ming Lin \and
Shuning Chang \and
Hao Li \and
Rong Jin
}
%
%
\institute{Alibaba Group\\ \email{\{pichao.wang, xue.w, fan.w, ming.l, shuning.csn, lihao.lh, jinrong.jr\}@alibaba-inc.com}\\}
\maketitle

\begin{abstract}
    Convolutional Neural Networks (CNNs) have dominated computer vision for years, due to its ability in capturing locality and translation invariance. Recently, many vision transformer architectures have been proposed and they show promising performance. A key component in vision transformers is the fully-connected self-attention which is more powerful than CNNs in modelling long range dependencies. However, since the current dense self-attention uses all image patches (tokens) to compute attention matrix, it may neglect locality of images patches and involve noisy tokens (e.g., clutter background and occlusion), leading to a slow training process and potential degradation of performance. To address these problems, we propose the $k$-NN attention for boosting vision transformers. Specifically, instead of involving all the tokens for attention matrix calculation, we only select the top-$k$ similar tokens from the keys for each query to compute the attention map. The proposed $k$-NN attention naturally inherits the local bias of CNNs without introducing convolutional operations, as nearby tokens tend to be more similar than others. In addition, the $k$-NN attention allows for the exploration of long range correlation and at the same time filters out irrelevant tokens by choosing the most similar tokens from the entire image. Despite its simplicity, we verify, both theoretically and empirically, that $k$-NN attention is powerful in speeding up training and distilling noise from input tokens. Extensive experiments are conducted by using 11 different vision transformer architectures to verify that the proposed $k$-NN attention can work with any existing transformer architectures to improve its prediction performance. The codes are available at \url{https://github.com/damo-cv/KVT}.

\end{abstract}

\section{Introduction}
 Traditional CNNs  provide state of the art performance in vision tasks, due to its ability in capturing locality and translation invariance, while transformer~\cite{vaswani2017attention} is the de-facto standard  for natural language processing (NLP) tasks thanks to its advantages in modelling long-range dependencies. Recently, various vision transformers~\cite{dosovitskiy2020image,touvron2020training,yuan2021tokens,wang2021pyramid,han2021transformer,yuan2021incorporating,liu2021swin,wu2021cvt,touvron2021going,yuan2021volo} have been proposed by building pure or hybrid transformer models for visual tasks. Inspired by the transformer scaling success in NLP tasks, vision transformer converts an image into a sequence of image patches (tokens), with each patch encoded into a vector. Since self-attention in the transformer is position agnostic, different positional encoding methods~\cite{dosovitskiy2020image,chu2021we,d2021convit} have been developed, and in~\cite{wu2021cvt,Zhengsu21} their roles have been replaced by convolutions. Afterwards, all tokens are fed into stacked transformer encoders for feature learning, with an extra $CLS$ token~\cite{dosovitskiy2020image,touvron2020training,d2021convit} or global average pooling (GAP)~\cite{liu2021swin,Zhengsu21} for final feature representation. Compared with CNNs, transformer-based models explicitly exploit global dependencies and demonstrate comparable, sometimes even better, results than highly optimised CNNs~\cite{he2016deep,tan2019efficientnet}.

Albeit achieving its initial success, vision transformers suffer from slow training. One of the key culprits is the fully-connected self-attention, which takes all the tokens to calculate the attention map. The dense attention not only neglects the locality of images patches, an important feature of CNNs, but also involves noisy tokens into the computation of self-attention, especially in the situations of cluttered background and occlusion. Both issues can slow down the training significantly~\cite{cordonnier2019relationship,d2021convit}. Recent works~\cite{yuan2021incorporating,wu2021cvt,Zhengsu21} try to mitigate this problem by introducing convolutional operators into vision transformers. Despite encouraging results, these studies fail to resolve the problem fundamentally from the transformer structure itself, limiting their success. In this study, we address the challenge by directly attacking its root cause, i.e. the fully-connected self-attention. 

To this end, we propose the $k$-NN attention to replace the fully-connected attention. Specifically, we do not use all the tokens for attention matrix calculation, but only select the top-$k$ similar tokens from the sequence for each query token to compute the attention map. The proposed $k$-NN attention not only naturally inherits the local bias of CNNs as the nearby tokens tend to be more similar than others, but also builds the long range dependency by choosing the most similar tokens from the entire image. Compared with convolution operator which is an aggregation operation built on Ising model~\cite{paiconvolutional} and the feature of each node is aggregated from nearby pixels, in the $k$-NN attention, the aggregation graph is no longer limited by the spatial location of nodes but is adaptively computed via attention maps, thus, the $k$-NN attention can be regarded as a relieved version of local bias. The similar idea is proposed in~\cite{zhao2019explicit} where the $k$-NN attention is mostly evaluated on NLP tasks. Despite the similarity in terms of the calculation of top-$k$, our work focuses on the recent vision transformers, makes a deep theoretical understanding and presents a thoroughly analysis by defining several metrics.  We verify, both theoretically and empirically, that $k$-NN attention is effective in speeding up training and distilling noisy tokens of vision transformers. Eleven different available vision transformer architectures are adopted to verify the effectiveness of the proposed $k$-NN attention.

\section{Related Work}
\subsection{Self-attention}
Self-attention~\cite{vaswani2017attention} has demonstrated promising results on NLP related tasks, and is making breakthroughs in speech and computer vision. For time series modeling, self-attention operates over sequences in a step-wise manner. Specifically, at every time-step, self-attention assigns an attention weight to each previous input element and uses these weights to compute the representation of the current time-step as a weighted sum of the past inputs. Besides the vanilla self-attention, many efficient transformers~\cite{tay2020efficient} have been proposed.  Among these efficient transformers, sparse attention and local attention are one of the main streams, which are highly related to our work. Sparse attention can be further categorized into  data independent (fixed) sparse attention~\cite{child2019generating,ho2019axial,beltagy2020longformer,zaheer2020big} and content-based sparse attention~\cite{correia2019adaptively,roy2021efficient,kitaev2020reformer,tay2020sparse}. 
Local attention~\cite{luong2015effective,liu2018generating,liu2021swin} mainly considers attending only to a local window size. Our work is also content-based attention, but compared with previous works~\cite{correia2019adaptively,roy2021efficient,kitaev2020reformer,tay2020sparse}, our $k$-NN attention has its merits for vision domain. For example, compared with routing transformer~\cite{roy2021efficient} that clusters both queries and keys, our $k$-NN attention equals only clustering keys by assigning each query as the cluster center, making the quantization more continuous which is a better fitting of image domain; compared with reformer~\cite{kitaev2020reformer} which adopts complex hashing attention that cannot guarantee each bucket contain both queries and keys, our $k$-NN attention can guarantee that each query has number $k$ keys for attention computing. In addition, our $k$-NN attention is also a generalized local attention, but compared with local attention, our $k$-NN attention not only enjoys the locality but also empowers the ability of global relation mining.

\subsection{Transformer for Vision}
Transformer~\cite{vaswani2017attention} is an effective sequence-to-sequence modeling network, and it has achieved state-of-the-art results in NLP tasks with the success of BERT~\cite{devlin2018bert}.  Due to its great success, it has also be exploited in computer vision community, and `Transformer in CNN' becomes a popular paradigm~\cite{carion2020end,wang2020end,chen2021transformer,zou2021end,li2021transformer,li2021trear,han2020exploiting,chang2021augmented}. ViT~\cite{dosovitskiy2020image} leads the other trend to use `CNN in Transformer' paradigm for vision tasks~\cite{he2021transreid,li2021lifting,yu2021transrppg,xu2021cdtrans,yin2021transfgu}. Even though ViT has been proved compelling in vision recognition, it has several drawbacks when compared with CNNs: large training data, fixed position embedding, rigid patch division, coarse modeling of inner patch feature, single scale, unstable training process, slow speed training, easily fitting data and poor generalization, shallow \& narrow architecture, and quadratic complexity. To deal with these problems, many variants have been proposed~\cite{zhang2021aggregating,jonnalagadda2021foveater,wang2021not,fang2021msg,huang2021shuffle,gao2021container,rao2021dynamicvit,yu2021glance,zhou2021refiner,el2021xcit,wang2021crossformer,xu2021evo,zhou2021elsa}. For example, DeiT~\cite{touvron2020training} adopts several training techniques and uses distillation to extend ViT to a data-efficient version; CPVT~\cite{chu2021we} proposes a conditional positional encoding that is  adaptable to arbitrary input sizes; CvT~\cite{wu2021cvt}, CoaT~\cite{xu2021co} and Visformer~\cite{Zhengsu21} safely remove the position embedding by introducing convolution operations; T2T ViT~\cite{yuan2021tokens}, CeiT~\cite{yuan2021incorporating}, and CvT~\cite{wu2021cvt} try to deal with the rigid patch division by introducing convolution operation for patch sequence generation; Focal Transformer~\cite{yang2021focal} makes each token
attend its closest surrounding tokens at fine granularity and the tokens far away
at coarse granularity; TNT~\cite{han2021transformer} proposes the pixel embedding to model the inner patch feature; PVT~\cite{wang2021pyramid}, Swin Transformer~\cite{liu2021swin}, MViT~\cite{fan2021multiscale}, ViL~\cite{zhang2021multi}, CvT~\cite{wu2021cvt}, PiT~\cite{heo2021rethinking}, LeViT~\cite{graham2021levit}, CoaT~\cite{xu2021co}, and Twins~\cite{chu2021Twins} adopt multi-scale technique for rich feature learning; DeepViT~\cite{zhou2021deepvit}, CaiT~\cite{touvron2021going}, and PatchViT~\cite{gong2021improve} investigate the unstable training problem, and propose the re-attention, re-scale and anti-over-smoothing techniques respectively for stable training; to accelerate the convergence of training, ConViT~\cite{d2021convit}, PiT~\cite{heo2021rethinking}, CeiT~\cite{yuan2021incorporating}, LocalViT~\cite{li2021localvit} and Visformer~\cite{Zhengsu21} introduce convolutional bias to speedup the training; \textit{conv-stem} is adopted in LeViT~\cite{graham2021levit}, EarlyConv~\cite{xiao2021early}, CMT~\cite{guo2021cmt}, VOLO~\cite{yuan2021volo} and ScaledReLU~\cite{wang2021scaled} to improve the robustness of training ViTs; LV-ViT~\cite{jiang2021token} adopts several techniques including MixToken and Token Labeling for better training and feature generation;  T2T ViT~\cite{yuan2021tokens}, DeepViT~\cite{zhou2021deepvit} and  CaiT~\cite{touvron2021going} try to train deeper vision transformer models; T2T ViT~\cite{yuan2021tokens}, ViL~\cite{zhang2021multi} and CoaT~\cite{xu2021co} adopt efficient transformers~\cite{tay2020efficient} to deal with the quadratic complexity; To further exploit the capacities of vision transformer, OmniNet~\cite{tay2021omninet}, CrossViT~\cite{chen2021crossvit} and So-ViT~\cite{xie2021so} propose the dense omnidirectional representations, coarse-fine-grained patch fusion and cross co-variance pooling of visual tokens, respectively.  However, all of these works adopt the fully-connected self-attention which will bring the noise or 
irrelevant tokens for computing and slow down the training of networks. In this paper, we propose an efficient sparse attention, called $k$-NN attention, for boosting vision transformers. The proposed $k$-NN attention not only inherits the local bias of CNNs but also achieves the ability of global feature exploitation. It can also speed up the training and achieve better performance.

\section{$k$-NN Attention}

\subsection{Vanilla Attention}
For any sequence of length $n$, the vanilla attention in the transformer is the dot product attention~\cite{vaswani2017attention}. Following the standard notation, the attention matrix $\bm{A} \in \mathbbm{R}^{n \times n}$ is defined as:
\begin{equation}
    \bm{A} = \textrm{softmax}\left(\frac{\bm{Q}\bm{K}^{\top}}{\sqrt{d}}\right),\notag
\end{equation}
where $\bm{Q} \in \mathbbm{R}^{n \times d}$ denotes the queries while $\bm{K} \in \mathbbm{R}^{n \times d}$ denotes the keys, and $d$ represents the dimension. By multiplying the attention weights $\bm{A}$ with the values $\bm{V} \in \mathbbm{R}^{n \times d} $, the new values $\hat{\bm{V}}$ are calculated as:

\begin{equation}
    \hat{\bm{V}} = \bm{A}\bm{V}.\notag
\end{equation}

%
%
%
%
The intuitive understanding of the attention is the weighted average over the old ones, where the weights are defined by the attention matrix $\bm{A}$. In this paper, we consider the $\bm{Q}$, $\bm{K}$ and $\bm{V}$ are generated via the linear projection of the input token matrix $\bm{X}$: 

$$\bm{Q} = \bm{X}\bm{W}_{\bm{Q}},\ \bm{K} = \bm{X}\bm{W}_{\bm{K}},\ \bm{V} = \bm{X}\bm{W}_V,$$
where $\bm{X}\in \mathbbm{R}^{n\times d_m}$, $\bm{W}_{\bm{Q}},\bm{W}_{\bm{K}},\bm{W}_V\in\mathbbm{R}^{d_m\times d}$ and $d_m$ is the input token dimension.


One shortcoming with fully-connected self-attention is that irrelevant tokens, even though assigned with smaller weights, are still taken into consideration when updating the representation $\bm{V}$, making it less resilient to noises in $\bm{V}$. This shortcoming motivates us to develop the $k$-NN attention.

\subsection{$k$-NN Attention}
Instead of computing the attention matrix for all the query-key pairs as in vanilla attention, we select the top-$k$ most similar keys and values for each query in the $k$-NN attention. There are two versions of $k$-NN attention, as described below.

\textbf{Slow Version:} 
 For the $i$-th query, we first compute the Euclidean distance against all the keys, and then obtain its $k$-nearest neighbors $\mathcal{N}_{i}^{k}$ and $\mathcal{N}_{i}^{v}$ from keys and values
 , and lastly calculate the scaled dot product attention as: 

$$
   \bm{A}_{i} = \textrm{softmax}\left(\frac{\langle\bm{q}_{i} ,(\bm{k}_{j_1},...,\bm{k}_{j_l},...,\bm{k}_{j_k})\rangle}{\sqrt{d}}\right), \bm{k}_{j_l} \in \mathcal{N}_{i}^{k}.\notag
$$
The shape of final attention matrix is $\bm{A}^{knn}\in {\mathbbm{R}^{n \times k}}$, and the new values $\hat{\bm{V}}^{knn}$ is the same size of values $\hat{\bm{V}}$. The slow version is the exact definition of $k$-NN attention, but it is extremely slow because for each query it has to compute distances for different $k$ keys. 

\textbf{Fast Version:}
As the computation of Euclidean distance against all the keys for each query is slow, we propose a fast version of $k$-NN attention. The key idea is to take advantage of matrix multiplication operations. Same as vanilla attention, all the queries and keys are calculated by the dot product, and then row-wise top-$k$ elements are selected for $\mathrm{softmax}$ computing. The procedure can be formulated as:

$$
\hat{\bm{V}}^{knn}=  \textrm{softmax}\left(\mathcal{T}_k\left(\frac{\bm{Q}\bm{K}^{\top}}{\sqrt{d}}\right)\right)\cdot \bm{V},	\notag
$$
where $\mathcal{T}_{k}\left(\cdot\right)$ denotes the row-wise top-k selection operator:

$$
\left[\mathcal{T}_{k}(\bm{A})\right]_{ij} = \begin{cases}
 	A_{ij} \quad&{A_{ij}\in \textrm{top-k}(\textrm{row $j$}) }\\
 	-\infty \quad&\textrm{ otherwise.} 
 \end{cases}\notag
$$

\subsection{Theoretical Analysis on $k$-NN Attention}

In this section, we will show theoretically that despite its simplicity, $k$-NN attention is powerful in speeding up network training and in distilling noisy tokens.  All the proof of the lemmas are provided in the supplementary.

{\bf Convergence Speed-up.} Compared to CNNs, the fully-connected self-attention is able to capture long range dependency. However, the price to pay is that the dense self-attention model requires to mix each image patch with every other patch in the image, which has potential to mix irrelevant information together, e.g. the foreground patches may be mixed with background patches through the self-attention. This defect could significantly slow down the convergence as the goal of visual object recognition is to identify key visual patches relevant to a given class. 

To see this, we consider the model with only learnable parameters $\bm{W}_{\bm{Q}}$, $\bm{W}_{\bm{K}}$ in attention layers and adopting Adam optimizer \cite{kingma2014adam}. According to Theorem 4.1 in \cite{kingma2014adam}, Adam's convergence is proportional to $\mathcal{O}\left(\alpha^{-1}(G_{\infty}+1)+\alpha G_{\infty}\right)$, where $\alpha$ is the learning rate and $G_{\infty}$ is an element-wise upper bound on the magnitude of the batch gradient\footnote{ Theorem 4.1 in \cite{kingma2014adam} describes the upper bound for regrets (the gap  on loss function value between the current step parameters and optimal parameters). One can telescope it to the average regrets to consider the Adam's convergence.}. Let $f_i$ be the loss function corresponding to batch $i$. Via chain rule of derivative, the gradient w.r.t the $\bm{W}_{Q}$ in a self-attention block can be represented as $\nabla_{\bm{W}_{\bm{Q}}}f_i =F_i(\hat{\bm{V}}^{knn})\cdot \frac{\partial \hat{\bm{V}}^{knn}}{\partial \bm{W}_{\bm{Q}}}$, where $F_i(\hat{\bm{V}}^{knn})$ is a matrix output function. Since the possible value of $\hat{\bm{V}}^{knn}$ is a subset of its fully-connected counterpart, the upper bound of on the magnitude of $F_i(\hat{\bm{V}}^{knn})$ is no larger than the full attention. We then introduce the weighed covariate matrix of patches to characterize the scale of $\frac{\partial \hat{\bm{V}}^{knn}}{\partial \bm{W}_{\bm{Q}}}$ in the following lemma. 

\begin{lemma}\label{lem:1} (Informal)
Let $\hat{\bm{V}}^{knn}_l$ be the $l$-th row of the $\hat{\bm{V}}^{knn}$.
We have

$$
\frac{\partial \hat{\bm{V}}_l^{knn}}{\partial \bm{W}_{Q}}\propto \textrm{Var}_{\bm{a}_l}(\bm{x})\textrm{ and } \frac{\partial \hat{\bm{V}}_l^{knn}}{\partial \bm{W}_{K}} \propto \textrm{Var}_{\bm{a}_l}(\bm{x}),\notag
$$
where $\textrm{Var}_{\bm{a}_l}(\bm{x})$ is the covariate matrix on patches $\{\bm{x}_1,...,\bm{x}_n\}$ with probability from $l$-th row of the attention matrix.\\ The same is true for $\hat{\bm{V}}$ of the fully-connected self-attention.
%
\end{lemma}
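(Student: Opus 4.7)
The plan is to compute $d\hat{\bm{V}}_l^{knn}$ directly through the softmax chain rule and then recognize the weighted covariance matrix $\textrm{Var}_{\bm{a}_l}(\bm{x})$ on the right-hand side via a mean-centering identity. The same structure will cover the $\bm{W}_K$ derivative by symmetry of the bilinear form $\bm{x}_l \bm{W}_Q \bm{W}_K^\top \bm{x}_j^\top$ underlying the score, and the fully-connected statement will fall out immediately by taking $\mathcal{N}_l^k = \{1,\ldots,n\}$.

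First I would fix a row $l$, freeze $\bm{W}_K$, and apply the standard softmax differential $da_{lj} = a_{lj}\bigl(ds_{lj} - \sum_m a_{lm}\,ds_{lm}\bigr)$ to the scores $s_{lj} = \bm{x}_l \bm{W}_Q \bm{W}_K^\top \bm{x}_j^\top / \sqrt{d}$. The $\sum_m$ term immediately produces the weighted mean $\bar{\bm{x}}_l = \sum_m a_{lm}\bm{x}_m$, so each $da_{lj}$ picks up a centered factor $(\bm{x}_j - \bar{\bm{x}}_l)^\top$. Because $\bm{v}_j = \bm{x}_j \bm{W}_V$ does not depend on $\bm{W}_Q$, the output differential $d\hat{\bm{V}}_l^{knn} = \sum_j da_{lj}\, \bm{v}_j$ then contains an inner sum of the form $\sum_j a_{lj}(\bm{x}_j - \bar{\bm{x}}_l)^\top \bm{x}_j$.

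The key step is the centering trick: since $\sum_j a_{lj}(\bm{x}_j - \bar{\bm{x}}_l) = 0$, I can freely replace the right-hand $\bm{x}_j$ by $\bm{x}_j - \bar{\bm{x}}_l$ inside the inner sum, turning it into exactly $\textrm{Var}_{\bm{a}_l}(\bm{x})$. This yields the factorization
$$
  d\hat{\bm{V}}_l^{knn} \;=\; \frac{1}{\sqrt{d}}\,\bm{x}_l\, d\bm{W}_Q\, \bm{W}_K^\top\, \textrm{Var}_{\bm{a}_l}(\bm{x})\, \bm{W}_V,
$$
which I read as the proportionality claim: every factor flanking $\textrm{Var}_{\bm{a}_l}(\bm{x})$ is independent of the attention index set and only controls an overall scale. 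Repeating the same computation while perturbing $\bm{W}_K$ and freezing $\bm{W}_Q$ reproduces the analogous factorization for $\partial \hat{\bm{V}}_l^{knn}/\partial \bm{W}_K$, and taking $\mathcal{N}_l^k$ to be the full index set recovers the fully-connected identity.

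The main obstacle I expect is the non-differentiability of $\mathcal{T}_k$: the top-$k$ selector has jumps where two scores cross, so strictly speaking the chain rule above is only valid where the top-$k$ index set is locally constant in the parameters. I would handle this exactly as with $\arg\max$ or ReLU-style hard gates, by restricting to a generic parameter regime in which $\mathcal{N}_l^k$ behaves as a fixed mask, so that the softmax derivative step proceeds verbatim with attention support on $\mathcal{N}_l^k$. The remaining work is just to turn the informal $\propto$ into a precise matrix-norm bound for the formal version in the supplementary, which follows from the factorization above by submultiplicativity.
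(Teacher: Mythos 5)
Your proposal is correct and takes essentially the same route as the paper: both differentiate the softmax attention weights, recognize the weighted first and second moments over $\bm{a}_l$, and read off $\textrm{Var}_{\bm{a}_l}(\bm{x})$ (the paper via the identity $\mathbbm{E}[\bm{x}^\top\bm{x}]-\mathbbm{E}[\bm{x}^\top]\mathbbm{E}[\bm{x}]$, you via the equivalent centering trick), then appeal to $\bm{Q}$--$\bm{K}$ symmetry and $k=n$ for the remaining claims. Your explicit remark that the top-$k$ selector $\mathcal{T}_k$ only yields a locally constant index set away from score ties is a valid point the paper leaves implicit.
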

Since $k$-NN attention only uses patches with large similarity, its $\textrm{Var}_{\bm{a}_l}(\bm{x})$ will be smaller than that computed from the fully-connected attention. As indicated in Lemma \ref{lem:1},  $\frac{\partial \hat{\bm{V}}^{knn}}{\partial \bm{W}_{\bm{Q}}}$ is proportional to variance $\textrm{Var}_{\bm{a}_l}(\bm{x})$ and thus the scale of $\nabla_{\bm{W}_{\bm{Q}}} f_i$ becomes smaller in k-NN attention. Similarly, the scale of $\nabla_{\bm{W}_{\bm{K}}} f_i$ is also smaller in k-NN attention. Therefore, the element-wise upper bound on batch gradient $G_{\infty}$ in Adam analysis is also smaller for k-NN attention.  For the same learning rate, the k-NN attention yields faster convergence.  
 It is particularly significant at the beginning of training. This is because, due to the random initialization, we expect a relatively small difference in similarities between patches, which essentially makes self-attention behave like ``global average". It will take multiple iterations for Adam to turn the "global average" into the real function of self-attention. In Table~\ref{trainingspeed} and Figure~\ref{properties}, we numerically verify the training efficiency of $k$-NN attention as opposed to the fully-connected attention.

{\bf Noisy patch distillation.}  As already mentioned before, the fully-connected self-attention model may mix irrelevant patches with relevant ones, particularly at the beginning of training when similarities between relevant patches are not significantly larger than those for irrelevant patches. $k$-NN attention is more effective in identifying noisy patches by only considering the top $k$ most similar patches. To formally justify this point, we consider a simple scenario where all the patches are divided into two groups, the group of relevant patches and the group of noisy patches. All the patches are sampled independently from unknown distributions. We assume that all relevant patches are sampled from distributions with the same shared mean, which is different from the means of distributions for noisy patches. It is important to know that although distributions for the relevant patches share the mean, those relevant patches can look quite differently, due to the large variance in stochastic sampling. In the following Lemma, we will show that the $k$-NN attention is more effective in distilling noises for the relevant patches than the fully-connected attention.



\begin{lemma}[informal]\label{lem:3}
We consider the self-attention for query patch $l$. Let's assume the patch $\bm{x}_i$ are bounded with mean $\bm{\mu}_i$ for $i=1,2,...,n$, and $ \rho_k$ is the ratio of the noisy patches in all selected patches.  Under mild conditions, the follow inequality holds with high probability:

$$
\left\|\hat{\bm{V}}_l^{knn}-\bm{\mu}_l\bm{W}_V\right\|_{\infty}\le \mathcal{O} (k^{-1/2}+c_1\rho_k),\notag
$$
where $c_1$ is a  positive number.
\end{lemma}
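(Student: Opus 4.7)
The plan is to rewrite $\hat{\bm V}_l^{knn}$ as a convex combination over the selected top-$k$ set $\mathcal N_l^k$:
$$\hat{\bm V}_l^{knn} = \sum_{j \in \mathcal N_l^k} a_{lj}\,\bm x_j\bm W_V, \qquad \sum_{j \in \mathcal N_l^k} a_{lj}=1,\ \ a_{lj}\ge 0,$$
where the $a_{lj}$ are the softmax weights from the $l$-th row of $\bm A^{knn}$. Comparing with $\bm\mu_l\bm W_V$ gives
$$\hat{\bm V}_l^{knn} - \bm\mu_l\bm W_V = \sum_{j\in\mathcal N_l^k} a_{lj}(\bm x_j-\bm\mu_l)\bm W_V.$$
Split $\mathcal N_l^k = S_r\cup S_n$ into the relevant indices (for which $\mathbb E[\bm x_j]=\bm\mu_l$ by assumption) and the noisy ones (fraction $\rho_k$). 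The triangle inequality then reduces the claim to bounding a \emph{signal} term over $S_r$ and a \emph{contamination} term over $S_n$ separately.

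For the signal term $\bigl\|\sum_{j\in S_r} a_{lj}(\bm x_j-\bm\mu_l)\bm W_V\bigr\|_\infty$, I would condition on the selected set $\mathcal N_l^k$ and on the query/key logits (which together determine the weights $a_{lj}$) and exploit the assumption that the relevant patches are drawn independently from bounded distributions sharing the mean $\bm\mu_l$. A coordinatewise Hoeffding bound on each entry of the weighted sum yields a deviation of order $\sqrt{\sum_{j\in S_r} a_{lj}^2}$ with high probability. Under the mild condition that the attention logits are bounded (so each softmax weight is $\Theta(1/k)$ on the top-$k$ set), $\sqrt{\sum_{j\in S_r} a_{lj}^2}=\mathcal O(k^{-1/2})$, producing the first term of the bound.

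For the contamination term $\bigl\|\sum_{j\in S_n} a_{lj}(\bm x_j-\bm\mu_l)\bm W_V\bigr\|_\infty$, I would use boundedness of $\bm x_j$, $\bm\mu_l$ and the operator norm of $\bm W_V$ to bound each summand by an absolute constant $C$, and then use $\sum_{j\in S_n} a_{lj} \le |S_n|\cdot \max_j a_{lj}\le C'\rho_k$ from the same two-sided softmax bound. This yields the $c_1\rho_k$ term. A union bound over the $d$ coordinates of the output then produces the stated $\ell_\infty$ bound with high probability.

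The main obstacle is that the softmax weights $a_{lj}$ and the selection $\mathcal N_l^k$ are themselves data-dependent functions of $\bm x_1,\dots,\bm x_n$, so the $\bm x_j$'s appearing in the signal term are not independent of the weights. The plan is to handle this by conditioning on $\mathcal N_l^k$ together with the attention logits, and then using exchangeability of the relevant patches (which, sharing mean $\bm\mu_l$, remain mean-$\bm\mu_l$ conditionally) to recover the independence required for Hoeffding. The uniform two-sided control $a_{lj}=\Theta(1/k)$ on the top-$k$ set, derived from the boundedness hypotheses, is what makes both the $k^{-1/2}$ concentration rate and the linear-in-$\rho_k$ contamination bound simultaneously legitimate; identifying the right quantitative ``mild conditions'' under which this holds is the technical heart of the argument.
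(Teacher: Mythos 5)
Your top-level decomposition --- split the top-$k$ set into relevant patches $S_r$ and noisy patches $S_n$ and bound the two errors separately --- is faithful in spirit to what the paper does (the paper's set $\mathcal S_1$ plays the role of your $S_r$). But the concentration step for the signal term has a real gap.

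You propose to "condition on $\mathcal N_l^k$ together with the attention logits" and then apply Hoeffding to $\sum_{j\in S_r} a_{lj}(\bm x_j-\bm\mu_l)\bm W_V$, claiming the relevant patches "remain mean-$\bm\mu_l$ conditionally" by exchangeability. That claim is false. The logit $\bm q_l\bm k_j^{\top}=\bm x_l\bm W_{\bm Q}\bm W_{\bm K}^{\top}\bm x_j^{\top}$ is a linear functional of $\bm x_j$; conditioning on its observed value (or on $j$ being selected into the top-$k$, which is a selection event involving this same functional) shifts the conditional mean of $\bm x_j$ away from $\bm\mu_l$, precisely because the patches selected are the ones whose fluctuations happened to produce large logits. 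Exchangeability holds only unconditionally; once you condition on distinct per-patch logit values, the patches are neither exchangeable nor mean-$\bm\mu_l$, so the Hoeffding step you want to run is not justified.

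The paper avoids this by \emph{not} conditioning. It writes the numerator $\sum_{i}\exp(\frac{1}{\sqrt d}\bm x_l\bm W_{\bm Q}\bm W_{\bm K}^{\top}\bm x_i^{\top})\bm x_i\bm W_V$ as three terms: $(a)$ a deterministic-weight sum $\sum_i\exp(\frac{1}{\sqrt d}\bm\mu_l\bm W_{\bm Q}\bm W_{\bm K}^{\top}\bm\mu_i^{\top})\bm\mu_i\bm W_V$ in which the weights depend only on the means and so carry no randomness; $(b)$ a noise sum $\sum_i\exp(\cdot)\bm h_i\bm W_V$, where the random exponential weight is first replaced by a deterministic high-probability envelope $U_1,U_2$ derived from sub-Gaussianity before Hoeffding is applied entrywise; and $(c)$ a residual $\sum_i[\exp(\text{random logit})-\exp(\text{mean logit})]\bm\mu_i\bm W_V$, which is controlled by a deterministic Lipschitz-type bound on the exponential. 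It then separately lower-bounds the softmax normalizer. This route never needs a conditional-mean statement for the selected patches. Your $a_{lj}=\Theta(1/k)$ hypothesis is also stronger than what the paper assumes; the paper instead derives two-sided bounds on $\exp(\cdot)$ from the sub-Gaussian tail ($\psi(\delta,d)$) and the condition $\sqrt d\gtrsim\psi+\nu_2+\nu_4$, which is more naturally tied to the model. To repair your argument, you would need to replace the conditioning step with the paper's mean/noise splitting of the weights themselves (or, alternatively, a uniform-over-weight-configurations concentration bound), rather than conditioning on data-dependent logits.
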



In the above lemma, the quantity $\left\|\hat{\bm{V}}_l^{knn}-\bm{\mu}_l\bm{W}_V\right\|_{\infty}$ measures the distance between $\hat{\bm{V}}^{knn}_l$, represention vector updated by the $k$-NN attention, and its mean $\mu_l\bm{W}_{\bm{V}}$. We now consider two cases: the normal $k$-NN attention with appropriately chosen $k$, and fully-connected attention with $k=n$. In the first case, with appropriately chosen $k$, we should have most of the selected patches coming from the relevant group, implying a small $\rho_k$. By combining with the fact that $k$ is decently large, we expect a small upper bound for the distance $\left\|\hat{\bm{V}}_l^{knn}-\bm{\mu}_l\bm{W}_V\right\|_{\infty}$, indicating that $k$-NN attention is powerful in distilling noise. For the case of fully-connected attention model, i.e. $k=n$, it is clearly that $\rho_n \approx 1$, leading to a large distance between transformed representation $\hat{\bm{V}}_l$ and its mean, indicating that fully-connected attention model is not very effective in distilling noisy patches, particularly when noise is large.

Besides the instance with low signal-noise-ratio, the instance with a large volume of backgrounds can also be hard. In the next lemma, we show that under a proper choice of $k$, with a high probability the $k$-NN attention will be able to select all meaningful patches.

\begin{lemma}[informal]\label{lem:2}
	Let $\mathcal{M}^*$ be the index set contains all patches relevant to query $\bm{q}_l$.
	Under mild conditions, there exist $c_2\in(0,1)$ such that with high probability, we have

$$
		\sum_{i=1}^n\mathbbm{1}(\bm{q}_l\bm{k}_i^{\top}\ge \min_{j\in\mathcal{M}^*}\bm{q}_l\bm{k}_j^{\top})\le \mathcal{O}(nd^{-c_2}).\notag
$$
\end{lemma}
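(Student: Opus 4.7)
The plan is to split the indicator sum into its contribution from $\mathcal{M}^*$ itself, which is trivially $|\mathcal{M}^*|$, and a "false-positive" term counting irrelevant indices $i\notin\mathcal{M}^*$ whose inner product with $\bm{q}_l$ exceeds the weakest relevant score. Assuming $|\mathcal{M}^*|$ is of constant or lower order than the target bound, the whole task reduces to controlling the false-positive count, and the main vehicle will be sub-Gaussian concentration of inner products combined with the mean gap between relevant and irrelevant patches.

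First I would establish a per-patch concentration statement. Under the model from Lemma~\ref{lem:3}, each $\bm{x}_i$ is bounded and independent, so $\bm{q}_l\bm{k}_i^{\top}=\bm{q}_l\bm{W}_{\bm{K}}^{\top}\bm{x}_i^{\top}$ is a bounded linear functional of independent coordinates, and Hoeffding's (or Bernstein's) inequality gives
$$
\Pr\!\left(\left|\bm{q}_l\bm{k}_i^{\top}-m_i\right|\ge t\right)\le 2\exp(-c\,d\,t^2),
$$
with $m_i:=\bm{q}_l\bm{W}_{\bm{K}}^{\top}\bm{\mu}_i^{\top}$. The mild conditions in the lemma ensure a strictly positive mean gap $\Delta:=\min_{i\notin\mathcal{M}^*}(m^*-m_i)>0$, where $m^*$ is the common mean inner product for relevant patches.

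Next I would handle the minimum over $\mathcal{M}^*$ by a union bound: with probability at least $1-|\mathcal{M}^*|\exp(-cd\varepsilon^2)$, one has $\min_{j\in\mathcal{M}^*}\bm{q}_l\bm{k}_j^{\top}\ge m^*-\varepsilon$. On this event, for each irrelevant $i$ the target inequality is implied by $\bm{q}_l\bm{k}_i^{\top}-m_i\ge \Delta-\varepsilon$. Setting $\varepsilon=\Delta/2$ and invoking the per-patch bound yields
$$
\Pr\!\left(\bm{q}_l\bm{k}_i^{\top}\ge \min_{j\in\mathcal{M}^*}\bm{q}_l\bm{k}_j^{\top}\right)\le \exp(-c'd),
$$
which for any fixed $c_2\in(0,1)$ is at most $d^{-c_2}$ once $d$ is large enough. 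Summing over the at most $n$ irrelevant indices gives expected false-positive count $\le n d^{-c_2}$. To upgrade the expectation bound to a high-probability bound, I would apply Bernstein's inequality to the (conditionally independent) sum of indicators, since both mean and variance are of order $nd^{-c_2}$, and obtain the stated $\mathcal{O}(nd^{-c_2})$ high-probability control.

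The main obstacle will be balancing the two concentration slacks: the minimum over $\mathcal{M}^*$ must not drift below $m^*$ by more than $\Delta$, while simultaneously every irrelevant score must not drift above $m_i$ by more than $\Delta$. Choosing $\varepsilon=\Delta/2$ handles this, but the argument relies on a constant-order mean gap $\Delta$ together with deviation scale decaying with $d$, which is what translates an exponential tail into the polynomial target rate $d^{-c_2}$; identifying the right $c_2\in(0,1)$ in terms of $\Delta$ and the boundedness constants is the delicate bookkeeping step. A secondary subtlety is that $\bm{q}_l$ itself is random (being a function of $\bm{x}_l$), so the argument should either condition on $\bm{q}_l$ throughout or absorb its randomness into the constants via another concentration step.
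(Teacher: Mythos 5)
Your route is genuinely different from the paper's, and the difference is not merely cosmetic. The paper does not analyze this lemma under the bounded-patch-with-mean-gap model from Lemma~\ref{lem:3}; instead it introduces a separate regression model (Assumption 3.1: $\bm{q}_l = \bm{\beta}^*\bm{K}+\bm{\epsilon}$ with a sparse simplex-constrained $\bm{\beta}^*$) and imports the sure-independence-screening machinery of Fan and Lv. Crucially, the paper's Assumption 3.2 allows the signal to be \emph{weak}: the nonzero coefficients $\beta_i^*$ may decay as $d^{-\kappa}$ and $\lambda_{\max}(\bm{K}\bm{K}^\top)$ may grow as $d^\tau$. The counting step is then a Markov-type magnitude argument on $\bm{\omega}=\bm{K}\bm{q}_l^\top$: one shows $\|\bm{\omega}\|_2^2 = \mathcal{O}(d^{1+\tau}n)$ from above and $\min_{j\in\mathcal{M}^*}\omega_j \gtrsim d^{1-\kappa}$ from below, so the number of coordinates exceeding that threshold is at most $\|\bm{\omega}\|_2^2/(d^{1-\kappa})^2 = \mathcal{O}(nd^{2\kappa+\tau-1})$, giving $c_2 = 1-2\kappa-\tau$. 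Your proof instead assumes a \emph{constant} mean gap $\Delta$ between relevant and irrelevant scores and applies per-coordinate sub-Gaussian concentration with a union bound, which yields an \emph{exponential} tail $\exp(-c'd)$ per false positive and hence an expected count of $n\exp(-c'd)$ — far smaller than $nd^{-c_2}$. You then recover the polynomial form only by loosely relaxing the exponential, which should be a warning sign: under your assumptions the lemma is trivially (and over-) satisfied for any $c_2$, so the constraint $c_2 \in (0,1)$ carries no information. In the paper's regime the polynomial rate is content-bearing — it is exactly where $\kappa$ and $\tau$ enter, and the requirement $2\kappa+\tau<1$ is what makes $c_2>0$ possible. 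So your argument is sound but answers a materially easier question: it establishes the conclusion under a strong-signal assumption the paper deliberately avoids. To align with the paper you would need to replace the constant-gap hypothesis with a dimension-dependent signal floor (of order $d^{1-\kappa}$ on the relevant inner products) and replace the union bound with the second-moment counting argument, since a per-coordinate tail bound alone cannot produce a merely polynomial false-positive rate when the gap is shrinking.
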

 
The above lemma shows that if we select the top $\mathcal{O}(nd^{-c_2})$ elements, with high probability, we will be able to eliminate almost all the irrelevant noisy patches, without losing any relevant patches. Numerically, we verify the proper $k$ gains better performance (e.g., Figure~\ref{impactofk}) and for the hard instance $k$-NN gives more accurate attention regions. (e.g., Figure~\ref{Visualization1} and Figure~\ref{Visualization2}).

\section{Experiments for Vision Transformers}
In this section, we replace the dense attention with $k$-NN attention on the existing vision transformers for image classification to verify the effectiveness of the proposed method. The recent DeiT~\cite{touvron2020training} and its variants, including T2T ViT~\cite{yuan2021tokens}, TNT~\cite{han2021transformer}, PiT~\cite{heo2021rethinking}, Swin~\cite{liu2021swin}, CvT~\cite{wu2021cvt}, So-ViT~\cite{xie2021so}, Visformer~\cite{Zhengsu21}, Twins~\cite{chu2021Twins},  Dino~\cite{caron2021emerging} and VOLO~\cite{yuan2021volo}, are adopted for evaluation. These methods include both supervised methods~\cite{touvron2020training,yuan2021tokens,han2021transformer,heo2021rethinking,liu2021swin,wu2021cvt,xie2021so,Zhengsu21,chu2021Twins,yuan2021volo} and self-supervised method~\cite{caron2021emerging}. Ablation studies are provided to further analyze the properties of $k$-NN attention.  

\subsection{Experimental Settings}
We perform image classification on the standard ILSVRC-2012 ImageNet dataset~\cite{russakovsky2015imagenet}. In our experiments, we follow the experimental setting of original official released codes. For fair comparison, we only replace the vanilla attention with proposed k-NN attention. Unless otherwise specified, the fast version of $k$-NN attention is adopted for evaluation. To speed up the slow version, we develop the CUDA version $k$-NN attention. As for the value $k$, different architectures are assigned with different values. For DeiT~\cite{touvron2020training}, So-ViT~\cite{xie2021so}, Dino~\cite{caron2021emerging}, CvT~\cite{wu2021cvt}, TNT~\cite{han2021transformer} PiT~\cite{heo2021rethinking} and VOLO~\cite{yuan2021volo}, as they directly split an input image into rigid tokens and there is no information exchange in the token generation stage, we suppose the irrelevant tokens are easy to filter, and tend to assign a smaller $k$ compared with these complicated token generation methods~\cite{yuan2021tokens,liu2021swin,Zhengsu21,chu2021Twins}.  Specifically, we assign $k$ to approximate $\frac{n}{2}$ at each scale stage; for these complicated token generation methods~\cite{yuan2021tokens,liu2021swin,Zhengsu21,chu2021Twins}, we assign a larger $k$ which is approximately $\frac{2}{3}{n}$ or $\frac{4}{5}{n}$ at each scale stage.

\subsection{Results on ImageNet}
Table~\ref{imagenet} shows top-$1$ accuracy results on the ImageNet-1K validation set by replacing the dense attention with $k$-NN attention using eleven different vision transformer architectures. From the Table we can see that the proposed $k$-NN attention improves the performance from 0.2\% to 0.8\% for both global and local vision transformers. It is worth noting that on ImageNet-1k dataset, it is very hard to improve the accuracy after 85\%, but our $k$-NN attention can still consistently improve the performance even without model size increase. 

\begin{table*}[ht!]
      \caption{The $k$-NN attention performance on ImageNet-1K validation set. "!" means we pretrain the model with 300 epochs and finetune the pretrained model for 100 epoch for linear eval, following the instructions of Dino training and evaluation;  "$\rightarrow$ $k$-NN Attn" represents replacing the vanilla attention with proposed $k$-NN attention;$\rightarrow$ $k$-NN Attn-slow means adopting the slow version. }\label{imagenet}
  \begin{center}
    \begin{tabular}{@{}l|l|l|l|l|l@{}}
      \toprule
      Arch.    & Model & Input &Params & GFLOPs  & Top-1 \\ \midrule
      Transformers  & DeiT-Tiny~\cite{touvron2020training} & $224^{2}$ & 5.7M & 1.3 &                  72.2\% \\ 
      (Supervised)             & DeiT-Tiny~\cite{touvron2020training} $\rightarrow$ k-NN Attn & $224^{2}$ & 5.7M & 1.3 & 73.0\% \\ 
                   & DeiT-Tiny~\cite{touvron2020training} $\rightarrow$ k-NN Attn-slow & $224^{2}$ & 5.7M & 1.3 & 73.0\% \\ 
                   & So-ViT-7~\cite{xie2021so} & $224^{2}$ & 5.5M & 1.3 & 76.2\% \\
                   & So-ViT-7~\cite{xie2021so} $\rightarrow$ k-NN Attn & $224^{2}$ & 5.5M & 1.3 & 77.0\% \\
                    
     \bottomrule
     Transformers  & Visformer-Tiny~\cite{Zhengsu21} & $224^{2}$ & 10M & 1.3 & 78.6\% \\ 
     (Supervised)               & Visformer-Tiny~\cite{Zhengsu21} $\rightarrow$ k-NN Attn & $224^{2}$ & 10M & 1.3 & 79.0\% \\ 
      \bottomrule
     Transformers & CvT-13~\cite{wu2021cvt}  & $224^{2}$ & 20M & 4.6 & 81.6\% \\
      (Supervised)       & CvT-13~\cite{wu2021cvt} $\rightarrow$ k-NN Attn & $224^{2}$ & 20M & 4.6 & 81.9\% \\
            & DeiT-Small~\cite{touvron2020training} & $224^{2}$ & 22M & 4.6 & 79.8\% \\ 
            & DeiT-Small~\cite{touvron2020training} $\rightarrow$ k-NN Attn & $224^{2}$ & 22M & 4.6 & 80.1\% \\  
            & TNT-Small~\cite{han2021transformer} & $224^{2}$ & 24M & 5.2 & 81.5\% \\ 
            & TNT-Small~\cite{han2021transformer} $\rightarrow$ k-NN Attn & $224^{2}$ & 24M & 5.2 & 81.9\% \\ 
            & VOLO-D1~\cite{yuan2021volo}  & $384^{2}$ & 27M & 22.8 & 85.2\% \\
            & VOLO-D1~\cite{yuan2021volo} $\rightarrow$ k-NN Attn & $384^{2}$ & 27M & 22.8 & 85.4\% \\
            &Swin-Tiny~\cite{liu2021swin}  & $224^{2}$ & 28M & 4.5 & 81.2\% \\
            &Swin-Tiny~\cite{liu2021swin} $\rightarrow$ k-NN Attn & $224^{2}$ & 28M & 4.5 & 81.3\% \\
            & T2T-ViT-t-19~\cite{yuan2021tokens}  & $224^{2}$ & 39M & 9.8 & 82.2\% \\ 
            & T2T-ViT-t-19~\cite{yuan2021tokens} $\rightarrow$ k-NN Attn & $224^{2}$ & 39M & 9.8 & 82.7\% \\  
    \midrule
    Transformer   & Dino-Small~\cite{caron2021emerging}! & $224^{2}$ & 22M & 4.6 & 76.0\%  \\
  (Self-supervised)   & Dino-Small~\cite{caron2021emerging}! $\rightarrow$ k-NN Attn & $224^{2}$ &                    22M & 4.6 & 76.2\% \\
    \midrule
    Transformers 
                 & Twins-SVT-Base~\cite{chu2021Twins}  & $224^{2}$ & 56M & 8.3 & 83.2\% \\
    (Supervised)              & Twins-SVT-Base~\cite{chu2021Twins} $\rightarrow$ k-NN Attn & $224^{2}$ & 56M & 8.3 & 83.4\% \\
                 & PiT-Base\cite{heo2021rethinking}  & $224^{2}$ & 74M & 12.5 & 82.0\% \\
                 & PiT-Base\cite{heo2021rethinking} $\rightarrow$ k-NN Attn & $224^{2}$ & 74M & 12.5 & 82.6\% \\
                 & VOLO-D3~\cite{yuan2021volo}  & $448^{2}$ & 86M & 67.9 & 86.3\% \\
                 & VOLO-D3~\cite{yuan2021volo} $\rightarrow$ k-NN Attn & $448^{2}$ & 86M & 67.9 & \textbf{86.5\%} \\
           \bottomrule
      \end{tabular}
  \end{center}  
\end{table*}

\subsection{The Impact of Number $k$}
The only parameter for $k$-NN attention is $k$, and its impact is analyzed in Figure~\ref{impactofk}. As shown in the figure, for DeiT-Tiny, $k$ = 100 is the best, where the total number of tokens $n$ = 196 (14 $\times$ 14), meaning that $k$ approximates half of $n$; for CvT-13, there are three scale stages with the number of tokens $n_{1}$ = 3136, $n_{2}$ = 784 and $n_{3}$ = 196, and the best results are achieved when the $k$ in each stage is assigned to 1600/400/100, which also approximate half of $n$ in each stage; for Visformer-Tiny, there are two scale stages with the number of tokens  $n_{1}$ = 196 and $n_{2}$ = 49, and the best results are achieved when $k$ in each stage is assigned to 150/45, as there are more than 21 conv layers for token generation and the information in each token are already mixed, making it hard to distinguish the irrelevant tokens, thus larger values of $k$ are desired; for PiT-Base, there are three scale stages with the number of tokens $n_{1}$ = 961, $n_{2}$ = 256 and $n_{3}$ = 64, and the optimal values of $k$  also  approximate the half  of $n$. Please note that, we do not  perform exhaustive search for the optimal choice of $k$, instead, a general rule as below is sufficient:  $k \approx $ $\frac{n}{2}$ at each scale stage for simple token generation methods and $k \approx \frac{2}{3}{n}$ or $\frac{4}{5}{n}$ for complicated token generation methods at each scale stage. 

\begin{figure}[!]
  \begin{minipage}{0.24\linewidth}
    \begin{center}
      \includegraphics[width=\linewidth]{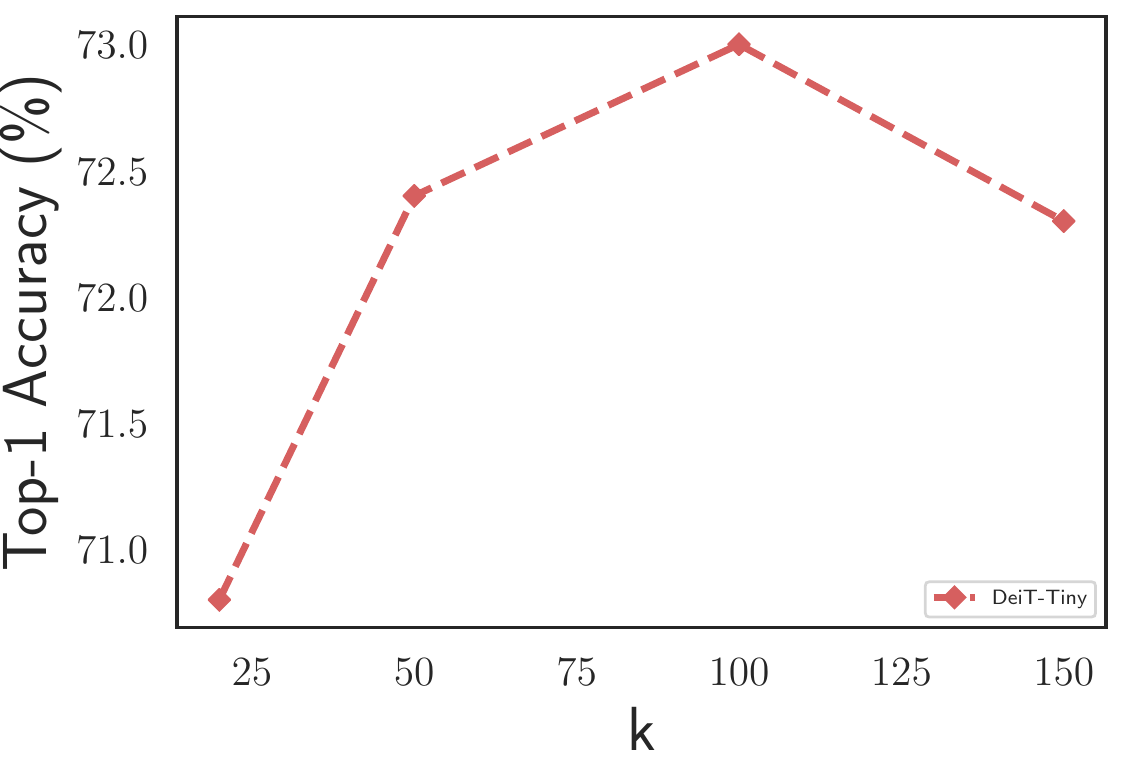}
      (a) DeiT-Tiny
    \end{center}
  \end{minipage}
    \begin{minipage}{0.24\linewidth}
    \begin{center}
      \includegraphics[width=\linewidth]{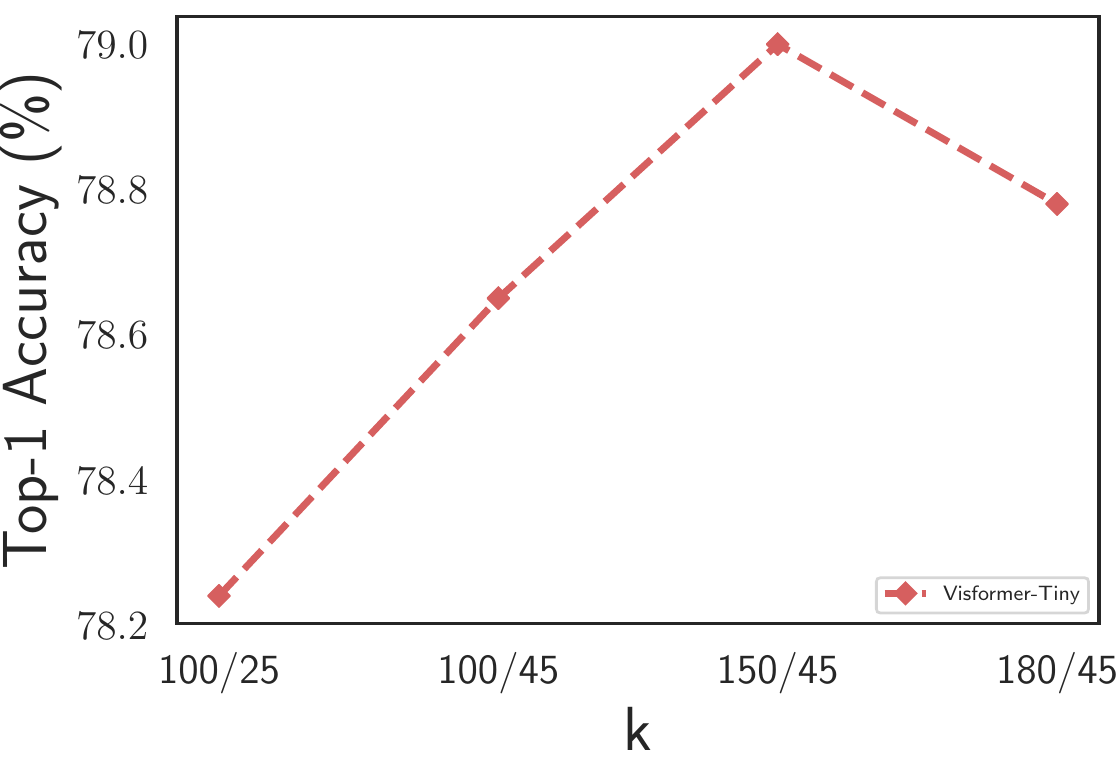}
      (c) Visformer-Tiny
    \end{center}
  \end{minipage}
    \begin{minipage}{0.24\linewidth}
    \begin{center}
      \includegraphics[width=\linewidth]{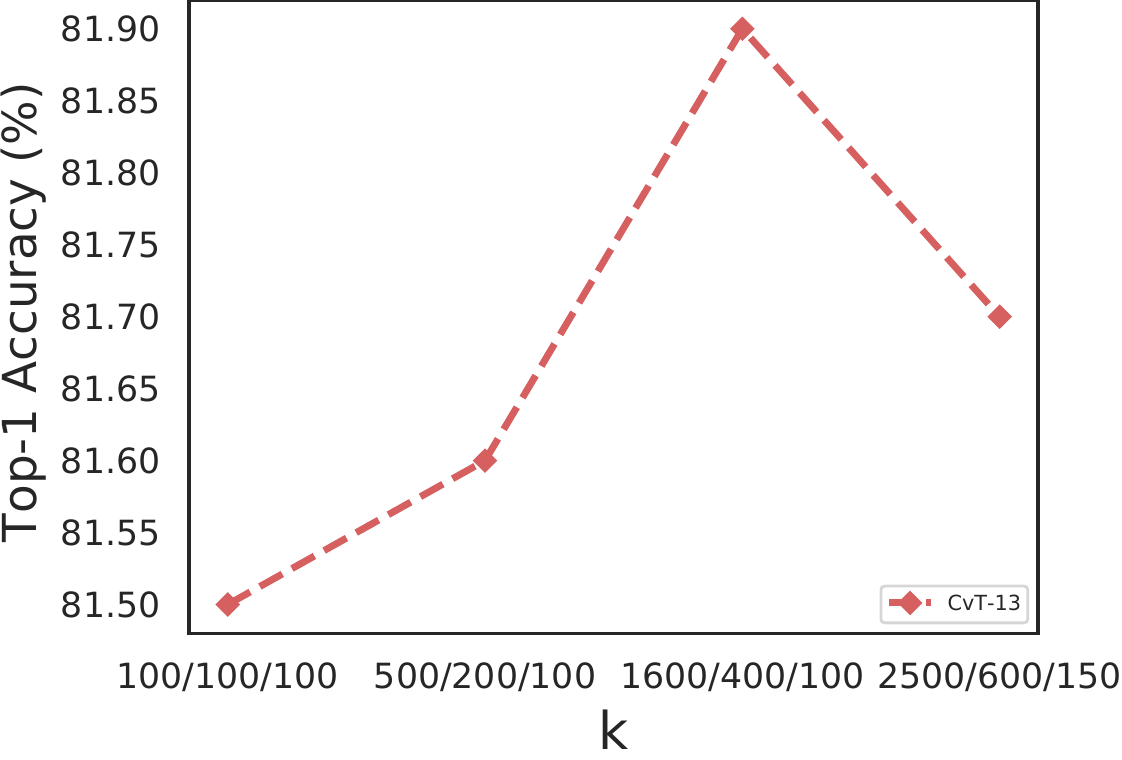}
      (b) CvT-13
    \end{center}
  \end{minipage}
      \begin{minipage}{0.24\linewidth}
    \begin{center}
      \includegraphics[width=\linewidth]{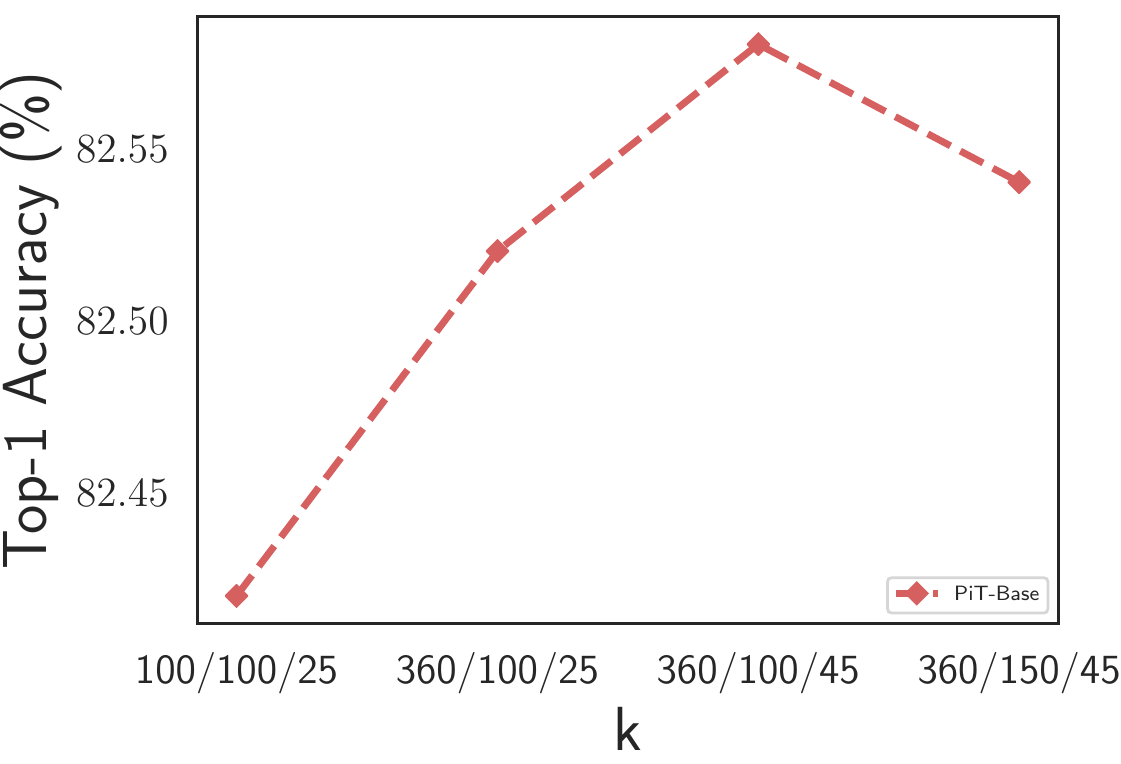}
      (d) PiT-Base
    \end{center}
  \end{minipage}
  \caption{The impact of $k$ on DeiT-Tiny, Visformer-Tiny, CvT-13 and PiT-Base.}
  \label{impactofk}
\end{figure}

\subsection{Convergence Speed of $k$-NN Attention}
In Table~\ref{trainingspeed}, we investigate the convergence speed of $k$-NN attention. Three methods are included for comparison, i.e. DeiT-Small~\cite{touvron2020training}, CvT-13~\cite{wu2021cvt} and T2T-ViT-t-19~\cite{yuan2021tokens}. From the Table we can see that the convergence speed of $k$-NN attention is faster than full-connected attention, especially in the early stage of training. These observations reflect that  removing the irrelevant tokens benefits the convergence of neural networks training.
\begin{table*}
\caption{Ablation study on the convergence speed of $k$-NN attention.}\label{trainingspeed}
\centering
\begin{tabular}{c|c|c|c|c|c|c} 
\toprule
\multirow{2}{*}{Epoch} & \multicolumn{6}{c}{Top-1 accuracy}                             \\ 
\cline{2-7}
& DeiT-S & DeiT-S $\rightarrow$ k & CvT-13 & CvT-13 $\rightarrow$ k & T2T-ViT-t-19 & T2T-ViT-t-19 $\rightarrow$ k \\ 
\midrule
10                     &  29.1\%    &         31.3\%            &  51.4\%   &   54.2\%   & 0.52\% &  0.68\%             \\ 

30                     &  54.4\%    &         55.4\%            &  65.4\%   &   68.1\%   &   63.0\% &  63.2\%   \\ 

50                     &   60.9\%   &         62.0\%            &  68.1\%   &   70.5\%   &   73.8\%  &  74.4\%       \\ 

70                     &   65.0\%   &         65.8\%            &  69.9\%  &    72.2\%   &   76.9\%  &  77.3\%     \\ 

90                     &    67.7\%  &         68.2\%            &  71.0\%   &   73.0\%   &   78.4\%  &  78.6\%     \\ 
120                     &    69.9\%  &        70.7\%            &  72.4\%   &   73.7\%   &   79.7\%  &  80.0\%      \\ 

150                    &    72.4\%  &         72.4\%            &   74.4\%  &   74.9\%   &   80.7\% &   80.9\%          \\ 

200                    &   75.5\%   &         75.7\%            &   77.3\%  &   77.7\%   & 82.0\% &    82.3\%        \\ 

300                    &    79.8\%  &         80.0\%            &   81.6\%  &   81.9\%  &  81.3\%   & 81.7\% \\
\bottomrule
\end{tabular}
\end{table*}


\subsection{Other properties of $k$-NN attention}
 To analyze other properties of $k$-NN attention, four quantitative metrics are defined as follows.\\
\textbf{Layer-wise cosine similarity between tokens:} following~\cite{gong2021improve} this metric is defined as:

$$
    \textrm{CosSim(\textbf{t})} = \frac{1}{n(n-1)}\sum_{i \neq j}\frac{t^{T}_{i}t_{j}}{\lVert t_{i}\rVert\lVert t_{j}\rVert}\notag,
$$
where $t_{i}$ represents the $i$-th token in each layer and $\lVert \cdot \rVert$ denotes the Euclidean norm. This metric implies the convergence speed of the network. 

\textbf{Layer-wise standard deviation of attention weights:} Given a token $t_{i}$ and its \textrm{softmax} attention weight \textrm{sfm($t_{i}$)}, the standard deviation of the \textrm{softmax} attention weight \textrm{std}(\textrm{sfm($t_{i}$)}) is defined as the second metric. For multi-head attention, the standard deviations over all heads are averaged. This metric represents the degree of training stability. 

\textbf{Ratio between the norms of residual activations and main branch:} The ratio between the norm of the residual activations and the norm of the activations of the main branch in each layer is defined as $\lVert f_{l}(t) \rVert / \lVert t \rVert$, where $f_{l}(t)$ can be the attention layer or the FFN layer. This metric denotes the information preservation ability of the network.

\textbf{Nonlocality:} following~\cite{d2021convit}, the nonlocality is defined by summing, for each query patch $i$, the distances $\left\|\delta_{ij}\right\|$ to all the key patches $j$ weighted by their attention score $\bm{A}_{ij}$. The number obtained over the query patch is averaged to obtain the nonlocality metric of head $h$, which can the be averaged over the attention heads to obtain the nonlocality of the whole layer $l$:
$$
    D_{loc}^{l,h}:= \frac{1}{L}\sum_{ij}\bm{A}_{ij}^{h,l}\left\|\delta_{ij}\right\|,
    D_{loc}^{l}:= \frac{1}{N_{h}}\sum_{h}D_{loc}^{l,h}\notag,
$$
where $D_{loc}$ is the number of patches between the center of attention and the query patch; the further the attention heads look from the query patch, the higher the nonlocality. 

\begin{figure}[ht!]
  \begin{minipage}{0.48\linewidth}
    \begin{center}
      \includegraphics[width=\linewidth]{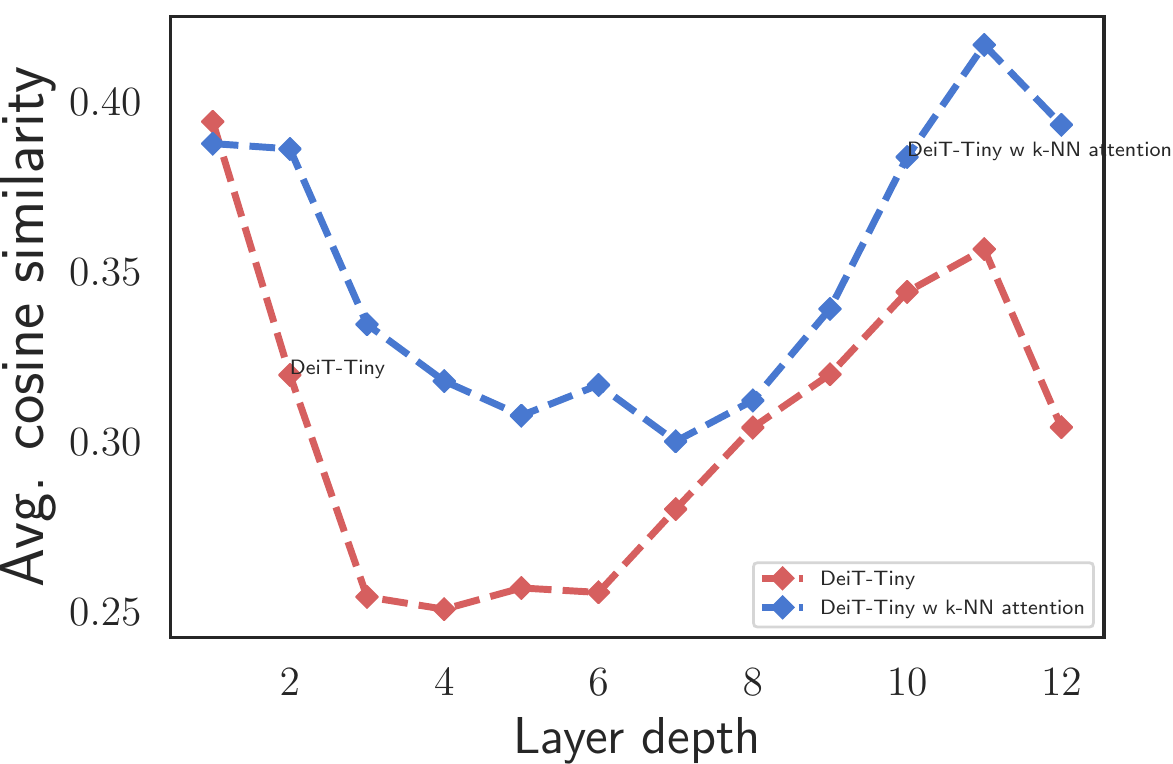}
      (a) Layer-wise cosine similarity of tokens
    \end{center}
  \end{minipage}
  \begin{minipage}{0.48\linewidth}
    \begin{center}
      \includegraphics[width=\linewidth]{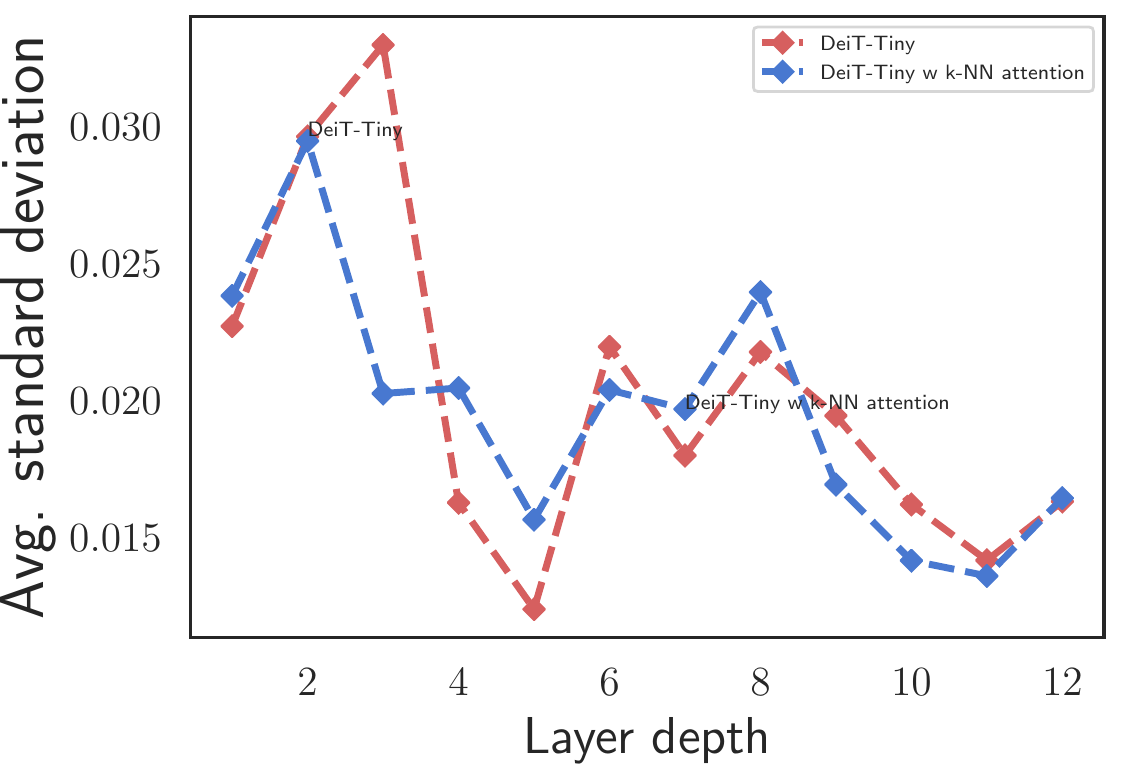}
      (b) Layer-wise $s.t.d$ of attention weights
    \end{center}
  \end{minipage}
    \begin{minipage}{0.48\linewidth}
    \begin{center}
      \includegraphics[width=\linewidth]{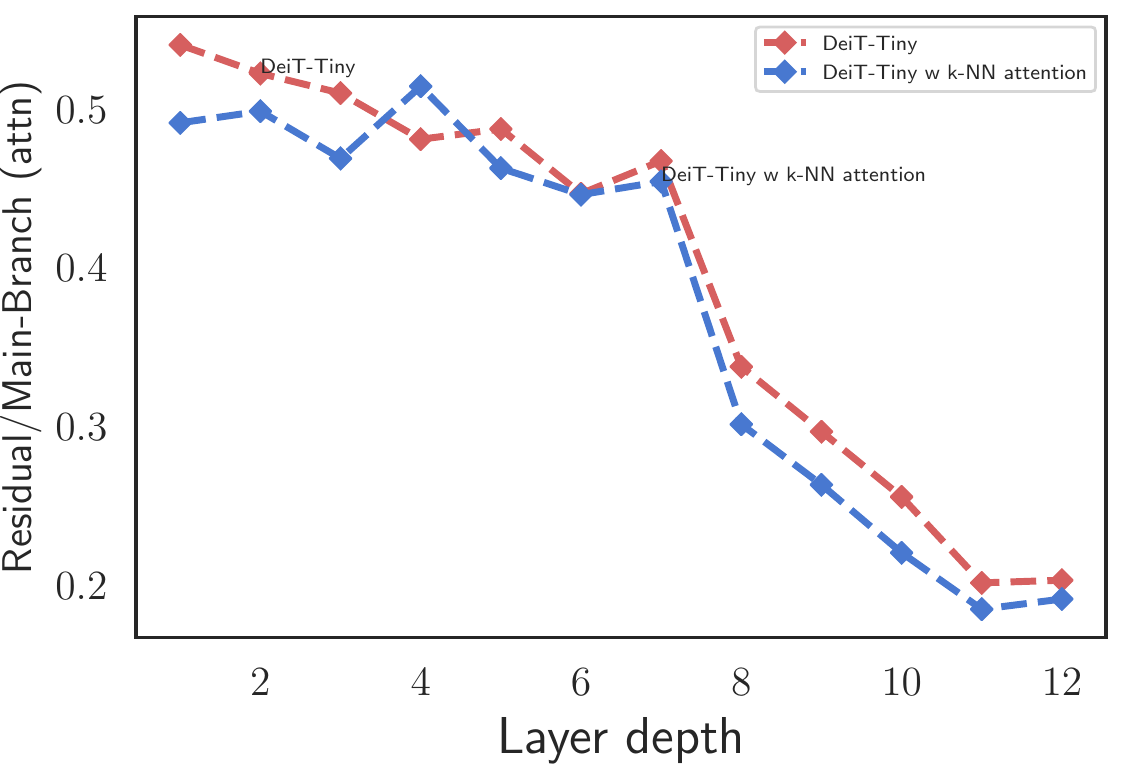}
      (c) Ratio of residual and main branch for attn
    \end{center}
  \end{minipage}
      \begin{minipage}{0.48\linewidth}
    \begin{center}
      \includegraphics[width=\linewidth]{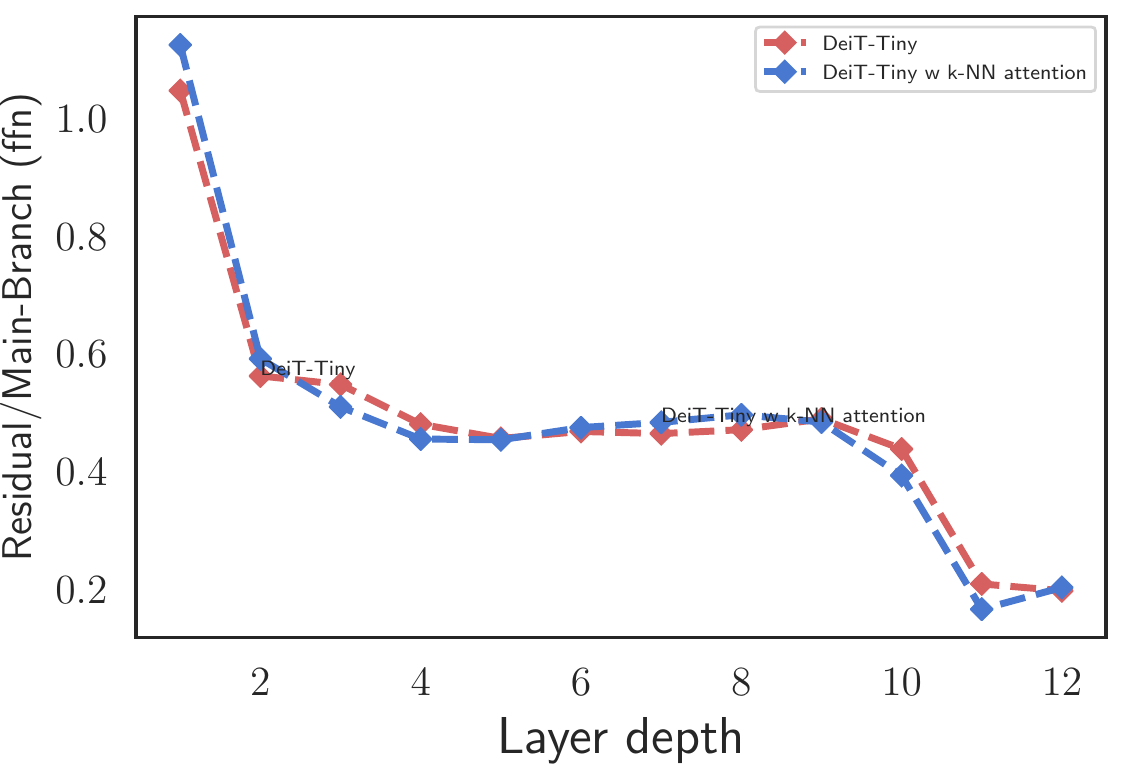}
      (d) Ratio of residual and main branch for ffn
    \end{center}
  \end{minipage}
  \caption{The properties of $k$-NN attention. Blue and red dotted lines represent the metrics for $k$-NN attetion and the original fully-connected self-attention, respectively.}\label{properties}
\end{figure}

Comparisons of the four metrics on DeiT-tiny without distillation token are shown in Figure~\ref{properties} and Figure~\ref{nonlocality}. From Figure~\ref{properties} (a) we can see that by using $k$-NN attention, the averaged cosine similarity is larger than that of using dense self-attention, which reflects that the convergence speed is faster for $k$-NN attention.  Figure~\ref{properties} (b) shows that  the averaged standard deviation of $k$-NN attention is smoother than that of fully-connected self-attention, and the smoothness will help make the training more stable. Figure~\ref{properties} (c) and (d) show the ratio between the norms of residual activations and main branch are consistent with each other for $k$-NN attention and dense attention, which indicates that there is nearly no information lost in $k$-NN attention by removing the irrelevant tokens. Figure~\ref{nonlocality} shows that, with k-NN attention, lower layers tend to focus more on the local areas (with more lines being pushed toward the bottom area in Figure 3), while the higher layers still maintain their capability of extracting global information. Additionally, it is also observed that the non-locality of different layers is spreading more evenly, indicating that they can explore a larger variety of dependencies at different ranges.

\begin{figure}[ht!]
 \begin{minipage}{0.4\linewidth}
    \begin{center}
      \includegraphics[width=\linewidth,height=\linewidth]{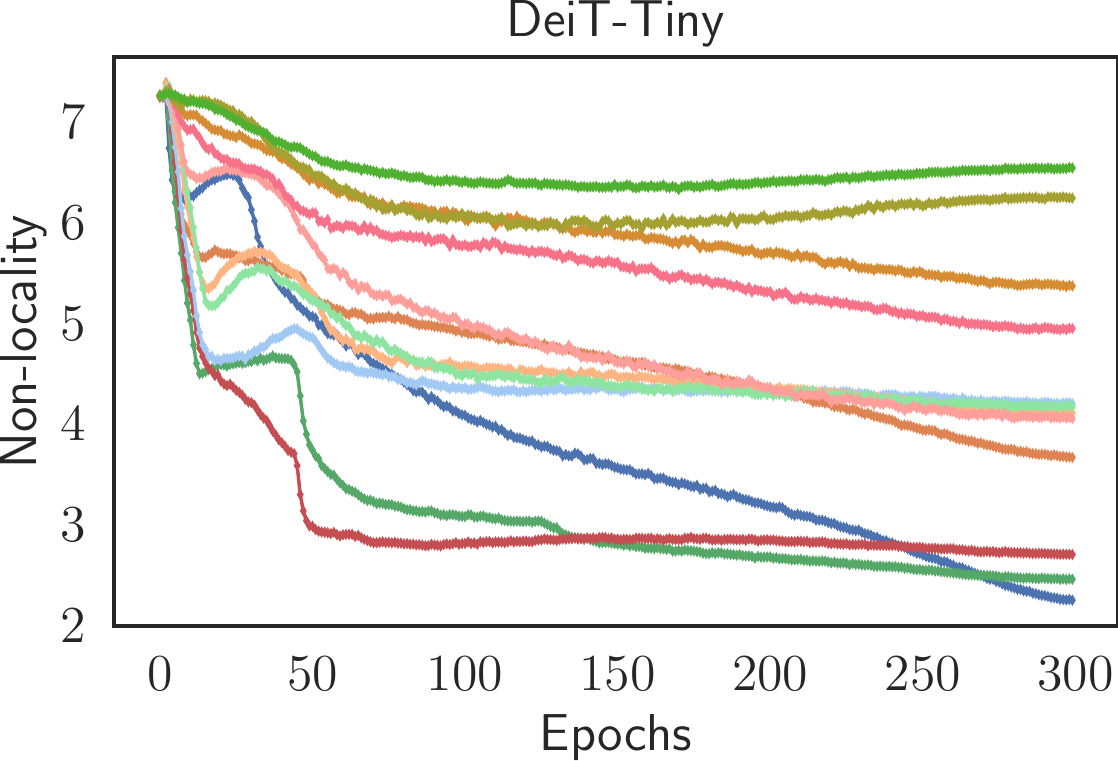}
    \end{center}
  \end{minipage}
      \begin{minipage}{0.59\linewidth}
    \begin{center}
      \includegraphics[width=\linewidth,height=0.68\linewidth]{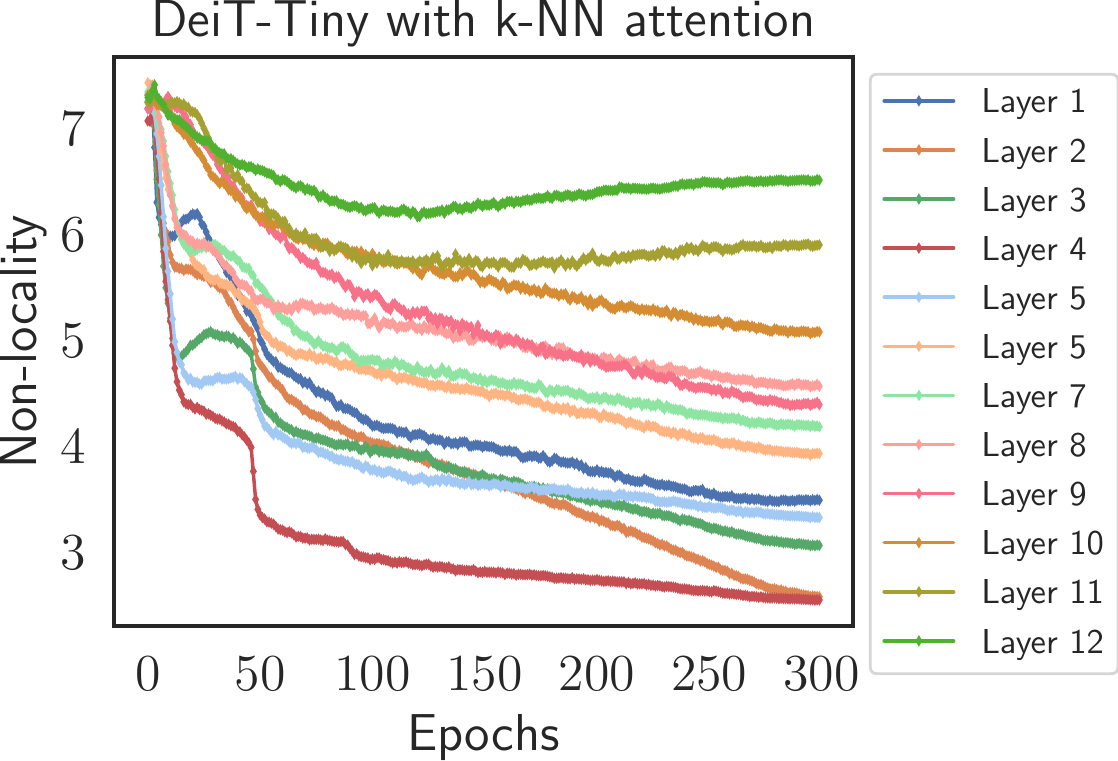}
    \end{center}
  \end{minipage}
  \caption{The nonlocality of DeiT-Tiny. It is plotted averaged over all the images from training set of ImageNet-1k.}\label{nonlocality}
\end{figure}

\subsection{Comparisons with temperature in softmax}
$k$-NN attention effectively zeros the bottom $N-k$ tokens out of the attention calculation. How does this compare with introducing a temperature parameter to \textrm{softmax} over the attention values? We compare our $k$-NN attention with temperature $t$ in \textrm{softmax} as \textrm{softmax(attn/$t$)}. The performance over the $t$ is shown in Table~\ref{temperature}. From the Table we can see that small $t$ makes the training crash due to large value of attention values; the performance increases a little bit to 72.5 (baseline 72.2) with $t$ assigned to appropriate values. The $k$-NN attention is more robust compared with temperature in \textrm{softmax}, and achieves much better performance, 73.0 ($k$-NN attention) vs 72.5 (best performance for temperature in \textrm{softmax}).

\begin{table}
\caption{The Top-$1$ (\%) over the temperature $t$ in \textrm{softmax}.}\label{temperature}
\centering
\begin{tabular}{c|c|c|c|c|c|c|c|c}
\hline
$t$ & 0.05 & 0.1 & 0.25 & 0.75  & 2 & 4 & 8 & 16 \\
\hline
Top-$1$ (\%) & crash & crash & 72.0 & 72.5 & 72.5 & 72.5 & 72.5 & 72.1  \\
\hline
\end{tabular}
\end{table}

\begin{figure}[ht!]
\begin{minipage}{0.02\linewidth}
    \begin{center}
     \begin{sideways} dense \end{sideways} 
    \end{center}
  \end{minipage}
  \begin{minipage}{0.131\linewidth}
    \begin{center}
      \includegraphics[width=\linewidth]{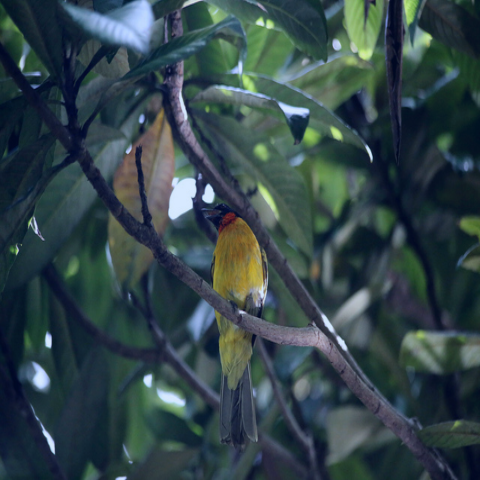}
    \end{center}
  \end{minipage}
  \begin{minipage}{0.131\linewidth}
    \begin{center}
      \includegraphics[width=\linewidth]{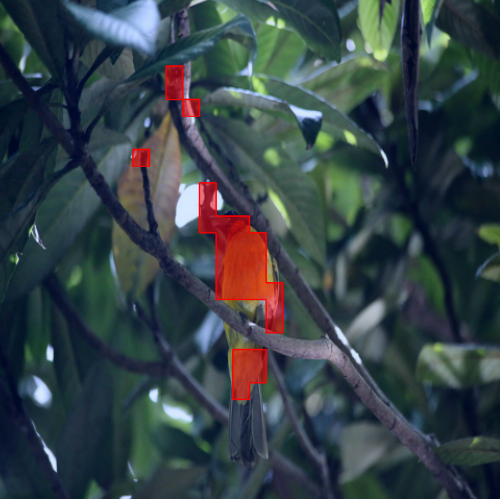}
    \end{center}
  \end{minipage}
    \begin{minipage}{0.131\linewidth}
    \begin{center}
      \includegraphics[width=\linewidth]{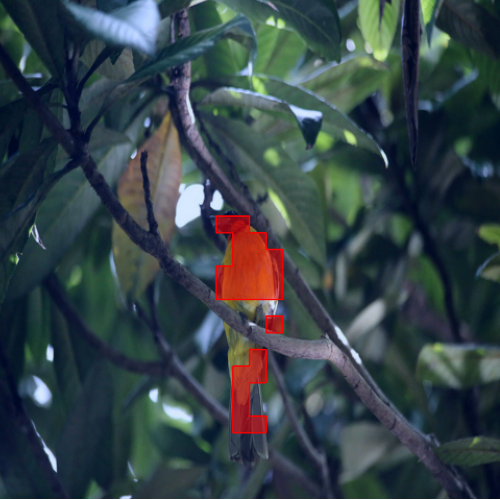}
    \end{center}
  \end{minipage}
  \begin{minipage}{0.131\linewidth}
    \begin{center}
      \includegraphics[width=\linewidth]{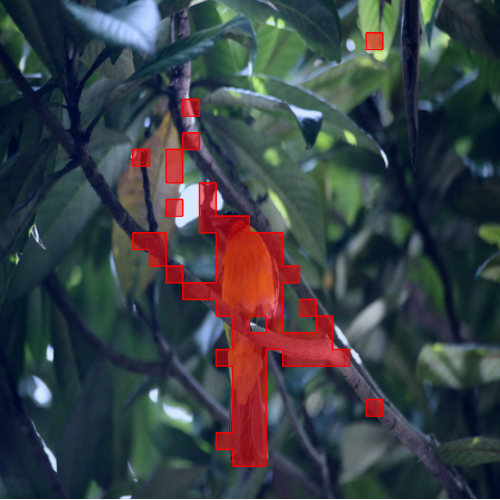}
    \end{center}
  \end{minipage}
    \begin{minipage}{0.131\linewidth}
    \begin{center}
      \includegraphics[width=\linewidth]{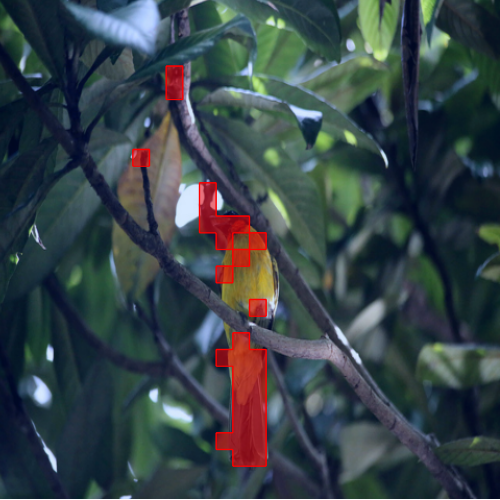}
    \end{center}
  \end{minipage}
    \begin{minipage}{0.131\linewidth}
    \begin{center}
      \includegraphics[width=\linewidth]{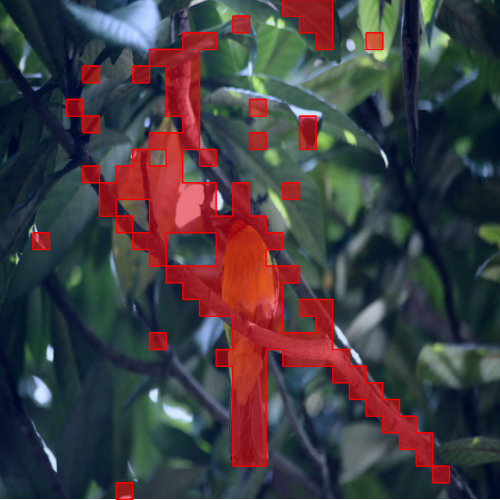}
    \end{center}
  \end{minipage}
    \begin{minipage}{0.131\linewidth}
    \begin{center}
      \includegraphics[width=\linewidth]{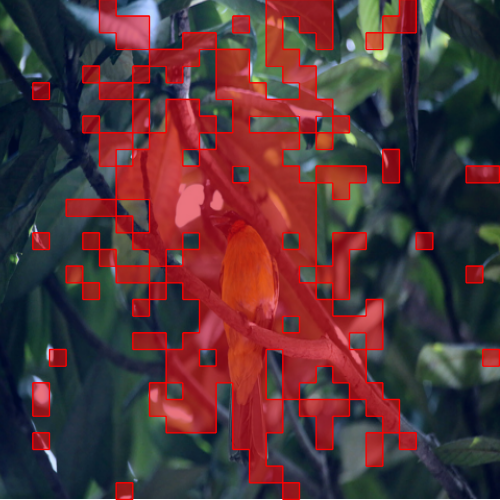}
    \end{center}
  \end{minipage}
  \begin{minipage}{0.02\linewidth}
    \begin{center}
     \begin{sideways} $k$-NN \end{sideways} 
    \end{center}
  \end{minipage}
  \begin{minipage}{0.131\linewidth}
    \begin{center}
      \includegraphics[width=\linewidth]{img.png}
      input
    \end{center}
  \end{minipage}
  \begin{minipage}{0.131\linewidth}
    \begin{center}
      \includegraphics[width=\linewidth]{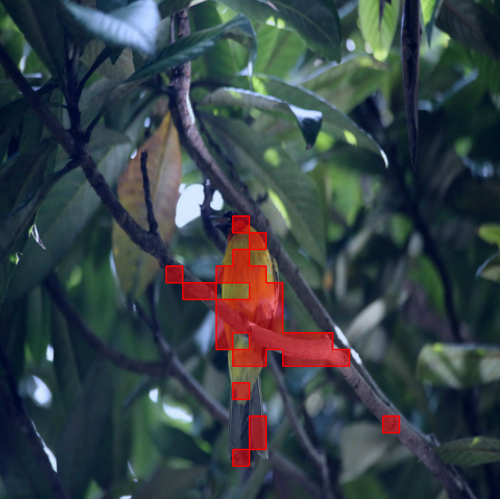}
      head0
    \end{center}
  \end{minipage}
    \begin{minipage}{0.131\linewidth}
    \begin{center}
      \includegraphics[width=\linewidth]{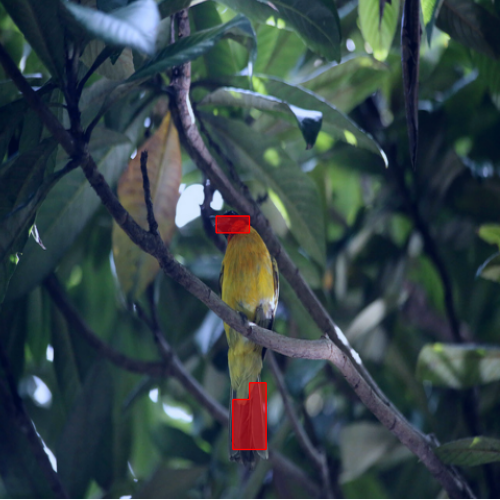}
      head1
    \end{center}
  \end{minipage}
  \begin{minipage}{0.131\linewidth}
    \begin{center}
      \includegraphics[width=\linewidth]{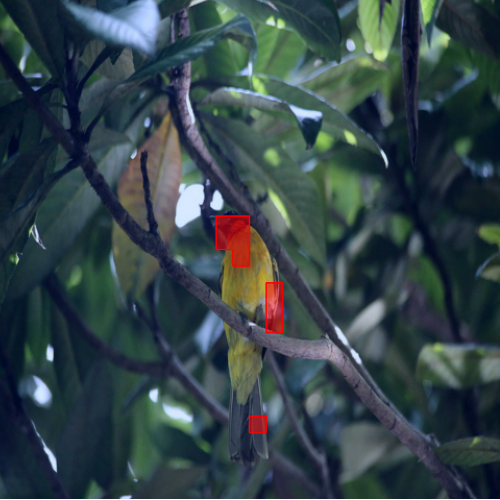}
      head2
    \end{center}
  \end{minipage}
    \begin{minipage}{0.131\linewidth}
    \begin{center}
      \includegraphics[width=\linewidth]{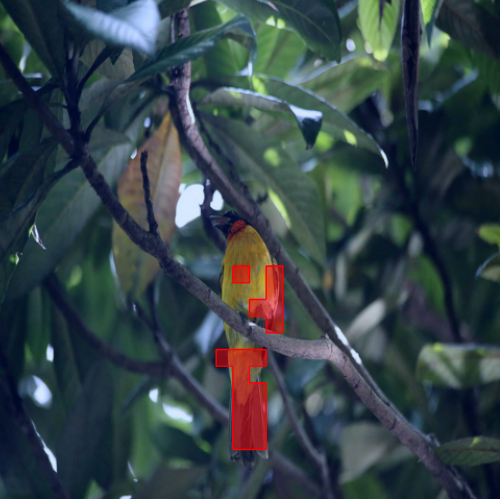}
      head3 
    \end{center}
  \end{minipage}
    \begin{minipage}{0.131\linewidth}
    \begin{center}
      \includegraphics[width=\linewidth]{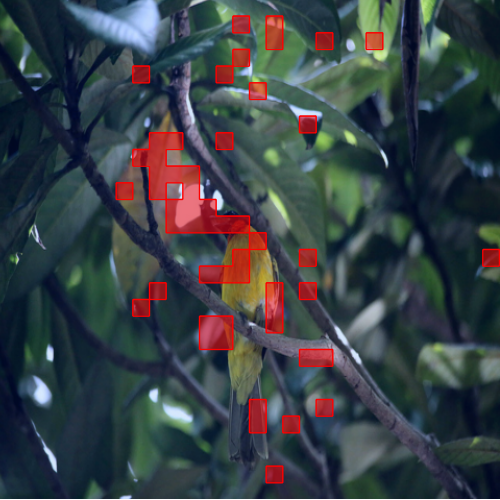}
      head4 
    \end{center}
  \end{minipage}
    \begin{minipage}{0.131\linewidth}
    \begin{center}
      \includegraphics[width=\linewidth]{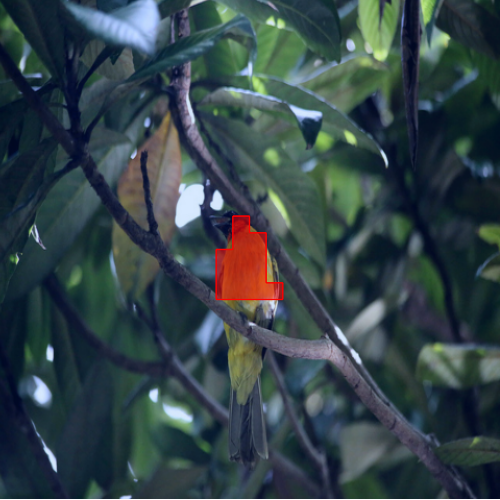}
      head5
    \end{center}
  \end{minipage}
  \caption{Self-attention heads from the last layer.}\label{Visualization1}
\end{figure}

\begin{figure}[ht!]
\begin{minipage}{0.02\linewidth}
    \begin{center}
     \begin{sideways} input \end{sideways} 
    \end{center}
  \end{minipage}
  \begin{minipage}{0.131\linewidth}
    \begin{center}
      \includegraphics[width=\linewidth]{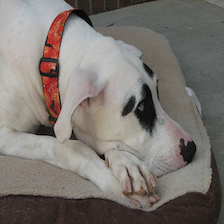}
    \end{center}
  \end{minipage}
  \begin{minipage}{0.131\linewidth}
    \begin{center}
      \includegraphics[width=\linewidth]{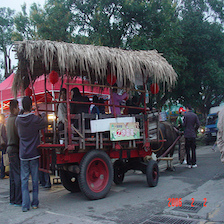}
    \end{center}
  \end{minipage}
    \begin{minipage}{0.131\linewidth}
    \begin{center}
      \includegraphics[width=\linewidth]{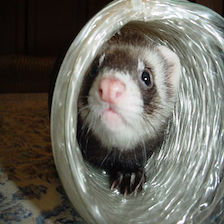}
    \end{center}
  \end{minipage}
  \begin{minipage}{0.131\linewidth}
    \begin{center}
      \includegraphics[width=\linewidth]{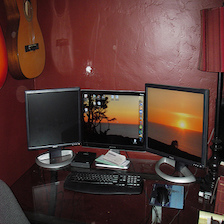}
    \end{center}
  \end{minipage}
    \begin{minipage}{0.131\linewidth}
    \begin{center}
      \includegraphics[width=\linewidth]{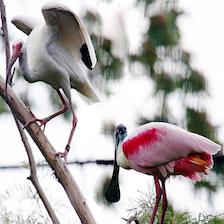}
    \end{center}
  \end{minipage}
    \begin{minipage}{0.131\linewidth}
    \begin{center}
      \includegraphics[width=\linewidth]{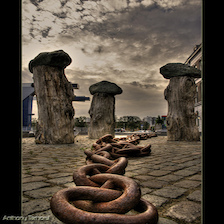}
    \end{center}
  \end{minipage}
    \begin{minipage}{0.131\linewidth}
    \begin{center}
      \includegraphics[width=\linewidth]{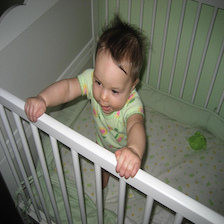}
    \end{center}
  \end{minipage}
  
\begin{minipage}{0.02\linewidth}
    \begin{center}
     \begin{sideways} dense attn \end{sideways} 
    \end{center}
  \end{minipage}
  \begin{minipage}{0.131\linewidth}
    \begin{center}
      \includegraphics[width=\linewidth]{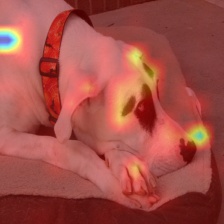}
    \end{center}
  \end{minipage}
  \begin{minipage}{0.131\linewidth}
    \begin{center}
      \includegraphics[width=\linewidth]{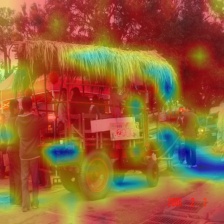}
    \end{center}
  \end{minipage}
    \begin{minipage}{0.131\linewidth}
    \begin{center}
      \includegraphics[width=\linewidth]{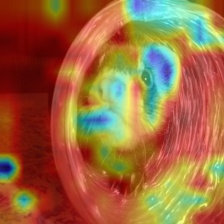}
    \end{center}
  \end{minipage}
  \begin{minipage}{0.131\linewidth}
    \begin{center}
      \includegraphics[width=\linewidth]{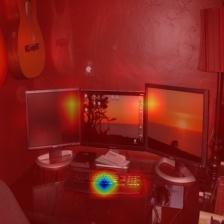}
    \end{center}
  \end{minipage}
    \begin{minipage}{0.131\linewidth}
    \begin{center}
      \includegraphics[width=\linewidth]{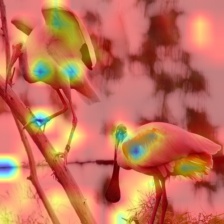}
    \end{center}
  \end{minipage}
    \begin{minipage}{0.131\linewidth}
    \begin{center}
      \includegraphics[width=\linewidth]{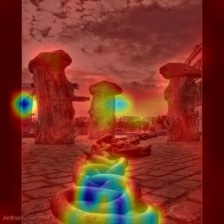}
    \end{center}
  \end{minipage}
    \begin{minipage}{0.131\linewidth}
    \begin{center}
      \includegraphics[width=\linewidth]{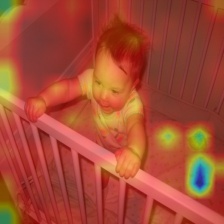}
    \end{center}
  \end{minipage}
  \begin{minipage}{0.02\linewidth}
    \begin{center}
     \begin{sideways} $k$-NN attn \end{sideways} 
    \end{center}
  \end{minipage}
  \begin{minipage}{0.131\linewidth}
    \begin{center}
      \includegraphics[width=\linewidth]{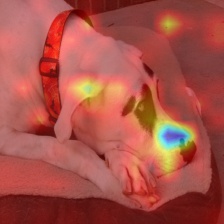}
      (a)dog
    \end{center}
  \end{minipage}
  \begin{minipage}{0.131\linewidth}
    \begin{center}
      \includegraphics[width=\linewidth]{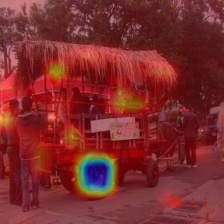}
      (b)wheel
    \end{center}
  \end{minipage}
    \begin{minipage}{0.131\linewidth}
    \begin{center}
      \includegraphics[width=\linewidth]{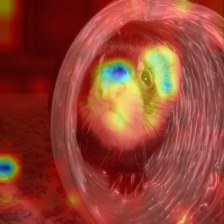}
      (c)ferret
    \end{center}
  \end{minipage}
  \begin{minipage}{0.131\linewidth}
    \begin{center}
      \includegraphics[width=\linewidth]{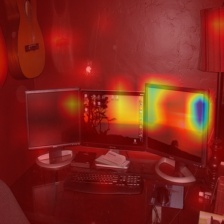}
      (d)monitor
    \end{center}
  \end{minipage}
    \begin{minipage}{0.131\linewidth}
    \begin{center}
      \includegraphics[width=\linewidth]{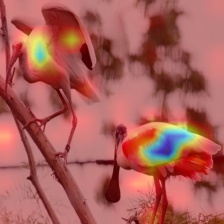}
      (e)hornbill
    \end{center}
  \end{minipage}
    \begin{minipage}{0.131\linewidth}
    \begin{center}
      \includegraphics[width=\linewidth]{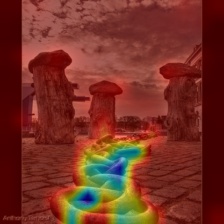}
      (f)chain 
    \end{center}
  \end{minipage}
    \begin{minipage}{0.131\linewidth}
    \begin{center}
      \includegraphics[width=\linewidth]{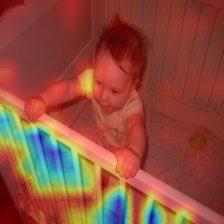}
      (g)crib
    \end{center}
  \end{minipage}
  \caption{Visualization using Transformer Attribution~\cite{chefer2020transformer}.}\label{Visualization2}
\end{figure}

\subsection{Visualization}
Figure~\ref{Visualization1} visualizes the self-attention heads from the last layer on Dino-Small~\cite{caron2021emerging}. We can see that different heads attend to different semantic regions of an image. Compared with dense attention, the $k$-NN attention filters out most irrelevant information from background regions which are similar to the foreground, and successfully concentrates on the most informative foreground regions. Images from different classes are visualized in Figure~\ref{Visualization2} using Transformer Attribution method~\cite{chefer2020transformer} on DeiT-Tiny. It can be seen that the $k$-NN attention is more concentrated and accurate, especially in the situations of cluttered background and occlusion.

\begin{table}[ht!]
\caption{Object detection and Segmentation results for Swin-Tiny and Twins-SVT-Base with/without $k$-NN attention on the COCO and ADE20K validation sets. All the models are pretrained on ImageNet-1k. }\label{ade20k}
\centering
\begin{tabular}{c|cc|cc} 
\hline
 &  \multicolumn{2}{c|}{COCO} & \multicolumn{2}{c}{ADE20K} \\
 Backbone & Method & mAP(box) & Method & mIoU \\
 \hline
 Swin-T & Mask R-CNN 3x & 46.0 & UPerNet & 44.5 \\
 Swin-T-$k$-NN  & Mask R-CNN 3x & 46.2 & UPerNet & 44.7 \\
 Twins-SVT-Base  & Mask R-CNN 1x & 45.2  & UPerNet & 47.4 \\
 Twins-SVT-Base-$k$-NN  & Mask R-CNN 1x & 45.6  & UPerNet & 47.9 \\
  \hline
\end{tabular}
\end{table}

\subsection{Object Detection and Semantic Segmentation}
To verify the effects of $k$-NN attention on object detection and semantic segmentation tasks, the widely-used COCO~\cite{lin2014microsoft} and ADE20K~\cite{zhou2019semantic} are adopted for evaluation. We adopt Swin-Tiny~\cite{liu2021swin} and Twins-SVT-Base~\cite{chu2021Twins} for comparisons due to the well released codes, and the results are shown in Table~\ref{ade20k}. From the Table we can see that by replacing the vanilla attention with our $k$-NN attention, the performance increases with almost no overhead.

\section{Conclusion}
In this paper, we propose an effective $k$-NN attention for boosting vision transformers. By selecting the most similar keys for each query to calculate the attention, it screens out the most ineffective tokens. The removal of irrelevant tokens speeds up the training. We theoretically prove its properties in speeding up  training, distilling noises without losing information, and increasing the performance by choosing a proper $k$. Several vision transformers are adopted to verify the effectiveness of the $k$-NN attention.

\clearpage
%
%
\bibliographystyle{splncs04}
\bibliography{eccv2022}

\title{Supplemental Material: \\KVT: $k$-NN Attention for Boosting Vision Transformers} 

\titlerunning{ECCV-22 submission ID \ECCVSubNumber} 
\authorrunning{ECCV-22 submission ID \ECCVSubNumber} 
\author{Anonymous ECCV submission}
\institute{Paper ID \ECCVSubNumber}

\titlerunning{KVT: $k$-NN Attention for Boosting Vision Transformers}
%
\authorrunning{Pichao Wang et al.}
\author{Pichao Wang\thanks{The first two authors contribute equally.} \and
Xue Wang\samethanks \and
Fan Wang \and
Ming Lin \and
Shuning Chang \and
Hao Li \and
Rong Jin
}
%
%
\institute{Alibaba Group\\ \email{\{pichao.wang, xue.w, fan.w, ming.l, shuning.csn, lihao.lh, jinrong.jr\}@alibaba-inc.com}\\}

\maketitle

\section{Differences with the arXiv paper: Explicit Sparse Transformer: Concentrated Attention Through Explicit Selection (EST)}
\textbf{Similarities:} Our method part is similar to EST in terms of the calculation of top-$k$.\\ \textbf{Differences:} \begin{itemize}
\item Our paper is focused not only on the methodology part, but also the deep understanding. There are many variants of Transformers in the NLP and vision community now, but few of them provide a deep and thorough analysis of their proposed methods. The proposed $k$-NN attention indeed happens to be similar to EST, which was arxived 2 years ago and we were not aware of it when conducting our research. In addition to applying the  idea to transformers and conducting extensive experiments as EST did, we provide theoretical justifications about the idea, which we think is equally or more important than the method itself, and helps with a more fundamental understanding.
\item The conclusion about how to select $k$ is different. In EST, it is found that a small $k$ is better (8 or 16), but we find a larger $k$, namely, $\geq \frac{1}{2} N$ is better ($N$ is the sequence length).
\item The motivations of these two papers are different: EST targets to get sparse attention maps while ours aims to distill noisy patches.
\item Our paper focuses on vision transformers but EST focuses on NLP tasks, even though EST applied it to the image captioning task. Since late 2020, vision transformer backbones have become very popular, and $k$-NN attention deserves a deeper analysis. Therefore, we apply the $k$-NN attention on 11 different vision transformer backbones for empirical evaluations and find it simple and effective for vision transformer backbones.
\item More analysis about the properties of $k$-NN attention in the context of vision transformer backbones are provided in our paper. Besides the $k$ selection and convergence speed as EST presented, we also define several metrics  to facilitate the analysis, e.g. layer-wise cosine similarity between tokens, layer-wise standard deviation of attention weights, ratio between the norms of residual activation and main branch, and nonlocality. We also compare it with temperature in $softmax$ and provide the visualizations.
\end{itemize}       
In summary, our paper provides deeper understanding with comprehensive analysis of the $k$-NN attention for vision transformers, which provides well-grounded knowledge advancement.

\section{Source codes of fast version $k$-NN attention in Pytorch}
The source codes of fast version $k$-NN attention in Pytorch are shown in Algorithm~\ref{code}, and we can see that the core codes of fast version $k$-NN attention is consisted of only four lines, and it can be easily imported to any architecture using fully-connected attention.
\definecolor{codegreen}{rgb}{0,0.6,0}
\definecolor{codegray}{rgb}{0.5,0.5,0.5}
\definecolor{codepurple}{rgb}{0.58,0,0.82}
\definecolor{backcolour}{rgb}{255,255,255}

\lstdefinestyle{mystyle}{
  backgroundcolor=\color{backcolour},   commentstyle=\color{codegreen},
  keywordstyle=\color{magenta},
  numberstyle=\tiny\color{codegray},
  stringstyle=\color{codepurple},
  basicstyle=\ttfamily\scriptsize,
  breakatwhitespace=false,         
  breaklines=true,                 
  captionpos=b,                    
  keepspaces=true,                 
  numbers=left,                    
  numbersep=5pt,                  
  showspaces=false,                
  showstringspaces=false,
  showtabs=false,                  
  tabsize=2
}

\lstset{style=mystyle}

\begin{algorithm}
\caption{Codes of fast version $k$-NN attention in Pytorch.}\label{code}
\lstinputlisting[language=Python]{code4latex.py}
\end{algorithm}

\section{Comparisons between slow version and fast version }\label{slowfast}
We develop two versions of $k$-NN attention, one slow version and one fast version. The $k$-NN attention is exactly defined by slow version, but its speed is extremely slow, as for each query it needs to select different $k$ keys and values, and this procedure is very slow. To speedup, we developed the CUDA version, but the speed is still slower than fast version. The fast version takes advantages of matrix multiplication and greatly speedup the computing. The speed comparisons on DeiT-Tiny using 8 V100 are illustrated in Table~\ref{speed}.

\begin{table}[H]
\centering
\begin{tabular}{l|l} 
\toprule
method & time per iteration (second)  \\ 
\midrule
 slow version (pytorch)      &     8192                \\ 
\midrule
 slow version (CUDA)       &       1.55              \\ 
\midrule
 fast version (pytorch)      &     0.45                \\
\bottomrule
\end{tabular}
\caption{The speed comparisons on DeiT-tiny for slow and fast version}\label{speed}
\end{table}

 
\section{Evaluations on CIFAR10 or CIFAR100.}
As vision transformers are data-hungry, directly training vision transformer backbones from scratch on small-size datasets such as CIFAR10 or CIFAR100 would yield much worse performances compared with ConvNets. Following the paradigm and codes of the NIPS2021 paper ``Efficient Training of Visual Transformers with Small Datasets", we briefly conducted experiments on CIFAR10 and CIFAR100 using Swin-T and T2T-ViT-14 with $k$-NN attention as shown in Table~\ref{cifar}. Adding $k$-NN attention brings much larger performance gain in the scratch training (ST) due to its faster convergence speed, while the gain in the setting of ImageNet-1k pretraining and CIFAR finetuning (FT) is not as large.
\begin{table}[H]
     \centering
        \begin{tabular}{lccccc}
        \toprule
            Model & C10 (ST) & C100 (ST) & C10 (FT) & C100 (FT)\\
            \midrule
            Swin-T      & 83.9 & 66.2  & 98.4 & 88.4\\
            Swin-T$\rightarrow$ k-NN Attn & 84.5 & 67.1 & 98.6 & 88.7\\
            T2T-VIT-14     & 87.6 & 68.0 & 98.5 & 87.7\\
            T2T-VIT-14$\rightarrow$ k-NN Attn & 88.2 & 68.8 & 98.8 & 88.1\\
        \bottomrule
        \end{tabular}
        \caption{Results on CIFAR10 and CIFAR100 (100 epochs).} \label{cifar}
    \end{table}

\section{Proof}\label{sec:proof}

{\bf Notations.} Throughout this appendix, we denote $x_i$ as $i$-th element of vector $\bm{x}$, $\bm{W}_{ij}$ as the element at $i$-th row and $j$-th column of matrix $\bm{W}$, and $\bm{W}_j$ as the $j$-th row of matrix $\bm{W}$. Moreover, we denote $\bm{x}_i$ as the $i$-th patch (token) of the inputs with $\bm{x}_i = \bm{X}_i$.\\

{\bf Proof for Lemma \ref{lem:1}}
We first give the formal statement of Lemma~\ref{lem:1}.

\begin{lemma}[Formal statement of Lemma~\ref{lem:1}]
Let $\hat{\bm{V}}^{knn}_l$ be the $l$-th row of the $\hat{\bm{V}}^{knn}$ and $\textrm{Var}_{\bm{a}_l}(\bm{x}) = \mathbbm{E}_{\bm{a}_l}[\bm{x}^{\top}\bm{x}]-\mathbbm{E}_{\bm{a}_l}[\bm{x}^{\top}]\mathbbm{E}_{\bm{a}_l}[\bm{x}]$ with $\mathbbm{E}_{\bm{a}_l}[\bm{x}] = \sum_{t=1}^n a_{lt}\bm{x}_t$.
Then for any $i,j = 1,2,...,n$, we have
\begin{align}
	\frac{\partial \hat{\bm{V}}_l}{\partial W_{\bm{Q},ij}} =
 x_{li}\bm{W}_{\bm{K},j}^{\top}\textrm{Var}_{a_l}(\bm{x})\bm{W}_{\bm{V}} \propto  \textrm{Var}_{a_l}(\bm{x})\notag
\end{align}
and
\begin{align}
	\frac{\partial \hat{\bm{V}}_l}{\partial W_{\bm{K},ij}} =
 x_{li}\bm{W}_{\bm{Q},j}^{\top}\textrm{Var}_{a_l}(\bm{x})\bm{W}_{\bm{V}} \propto  \textrm{Var}_{a_l}(\bm{x}).\notag
\end{align}
The same is true for $\hat{\bm{V}}$ of the fully-connected self-attention.
%
\end{lemma}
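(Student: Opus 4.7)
The plan is to treat the softmax attention as a weighted expectation operator and then expand the gradient via the chain rule through the pre-softmax logits. To start, I would rewrite
$\hat{\bm{V}}_l=\sum_{t}a_{lt}\bm{x}_{t}\bm{W}_{\bm{V}}=\mathbbm{E}_{\bm{a}_l}[\bm{x}]\,\bm{W}_{\bm{V}}$, where $a_{lt}$ is the softmax of the logits $s_{lt}=\bm{x}_l\bm{W}_{\bm{Q}}\bm{W}_{\bm{K}}^{\top}\bm{x}_t^{\top}/\sqrt{d}$. All dependence of $\hat{\bm{V}}_l$ on $\bm{W}_{\bm{Q}}$ (or $\bm{W}_{\bm{K}}$) therefore funnels through the scalars $\{s_{lt}\}_t$. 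An elementary derivative gives $\partial s_{lt}/\partial W_{\bm{Q},ij}=x_{li}\,(\bm{x}_t\bm{W}_{\bm{K}}^{(j)})/\sqrt{d}$, where $\bm{W}_{\bm{K}}^{(j)}$ is the $j$-th column of $\bm{W}_{\bm{K}}$ (matching the dimensions in the lemma's statement).

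Next I would invoke the standard softmax Jacobian identity, which produces a centering operation: $\partial a_{lt}/\partial W_{\bm{Q},ij}=a_{lt}\bigl(\partial s_{lt}/\partial W_{\bm{Q},ij}-\mathbbm{E}_{\bm{a}_l}[\partial s/\partial W_{\bm{Q},ij}]\bigr)$. Because $\partial s_{lt}/\partial W_{\bm{Q},ij}$ is linear in $\bm{x}_t$, the centering reduces to $x_{li}\,(\bm{x}_t-\mathbbm{E}_{\bm{a}_l}[\bm{x}])\bm{W}_{\bm{K}}^{(j)}/\sqrt{d}$. Substituting into $\partial \hat{\bm{V}}_l/\partial W_{\bm{Q},ij}=\sum_t(\partial a_{lt}/\partial W_{\bm{Q},ij})\,\bm{x}_t\bm{W}_{\bm{V}}$ and factoring out constants gives
\begin{equation}
\frac{\partial \hat{\bm{V}}_l}{\partial W_{\bm{Q},ij}}=\frac{x_{li}}{\sqrt{d}}\,(\bm{W}_{\bm{K}}^{(j)})^{\top}\Bigl(\mathbbm{E}_{\bm{a}_l}[\bm{x}^{\top}\bm{x}]-\mathbbm{E}_{\bm{a}_l}[\bm{x}]^{\top}\mathbbm{E}_{\bm{a}_l}[\bm{x}]\Bigr)\bm{W}_{\bm{V}},\notag
\end{equation}
which is precisely $x_{li}\bm{W}_{\bm{K},j}^{\top}\textrm{Var}_{\bm{a}_l}(\bm{x})\bm{W}_{\bm{V}}$ up to the $1/\sqrt{d}$ absorbed into the proportionality. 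The $\bm{W}_{\bm{K}}$ case is obtained by the same computation with the roles of $\bm{Q}$ and $\bm{K}$ swapped, since the logit $s_{lt}$ is symmetric bilinear in $(\bm{W}_{\bm{Q}},\bm{W}_{\bm{K}})$ through the factor $\bm{x}_l\bm{W}_{\bm{Q}}\bm{W}_{\bm{K}}^{\top}\bm{x}_t^{\top}$.

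Finally, I would note that for the $k$-NN attention case the identical argument goes through: the top-$k$ operator $\mathcal{T}_k$ is locally constant in a neighborhood of almost every parameter value, so differentiation proceeds on the fixed active set of indices and the softmax identity still produces the centering structure, now with $\mathbbm{E}_{\bm{a}_l}$ taken with respect to the attention weights supported on the selected neighbors. The statement for the fully-connected attention then follows as the special case in which the support is all $n$ tokens. The main obstacle I anticipate is purely bookkeeping: one has to be careful in tracking row versus column vectors when identifying the centered second moment $\mathbbm{E}_{\bm{a}_l}[\bm{x}^{\top}\bm{x}]-\mathbbm{E}_{\bm{a}_l}[\bm{x}]^{\top}\mathbbm{E}_{\bm{a}_l}[\bm{x}]$ as the weighted covariance $\textrm{Var}_{\bm{a}_l}(\bm{x})$, and to verify that the outer factors $x_{li}$ and $\bm{W}_{\bm{K},j}^{\top}$ sandwich this matrix in the order prescribed by the statement.
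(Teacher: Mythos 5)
Your proposal is correct and takes essentially the same route as the paper: both expand via the chain rule through the logits, invoke the softmax Jacobian (centering) identity so that the derivative becomes a weighted covariance, and then appeal to $\bm{Q}$--$\bm{K}$ symmetry for the other case. Your explicit remark that $\mathcal{T}_k$ is locally constant almost everywhere (so differentiation proceeds on a fixed active set) is a small but worthwhile clarification that the paper leaves implicit, and your retention of the $1/\sqrt{d}$ factor (absorbed into the proportionality) is slightly more faithful to the stated attention definition than the paper's own computation, which silently drops it.
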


\begin{proof}
Let's first consider the derivative of $\hat{\bm{V}}_l$ over $W_{\bm{Q},ij}$. Via some  algebraic computation, we have  
	\begin{align}
	\frac{\partial \hat{\bm{V}}_l}{\partial W_{\bm{Q},ij}} =   \frac{\partial (\bm{a}_l\bm{V})}{\partial W_{\bm{Q},ij}}
=	\displaystyle\sum_{t=1}^n\displaystyle a_{lt}\left(\frac{\partial \mathcal{T}_{l}^{knn}(t)}{\partial W_{\bm{Q},ij}}-\displaystyle\sum_{k_1 = 1}^n a_{lk_1}\frac{\partial \mathcal{T}_{l}^{knn}(k_1)}{\partial W_{\bm{Q},ij}}\right)\bm{x}_t\bm{W}_{\bm{V}}
\label{eq:lem:proof:1},
	\end{align}
where we denote $\mathcal{T}_{l}^{knn}(k)$ as follow for shorthand:
\begin{align}
\mathcal{T}_l^{knn}(k_1) = \begin{cases}
 	\bm{x}_l\bm{W}_{\bm{Q}}\bm{W}_{\bm{K}}^{\top}\bm{x}_{k_1}^{\top},\  &\textrm{if patch $k_1$ is selected in row $l$}\\
 -\infty, \  &\textrm{otherwise}
 \end{cases}\notag
\end{align}
Let denote set $\mathcal{S}\doteq \{i: \textrm{patch $i$ is selected in row $l$}\}$ and then we consider the right-hand-side of \eqref{eq:lem:proof:1}.
\begin{align}
\eqref{eq:lem:proof:1} =& \sum_{t\in \mathcal{S}}^na_{lt}\left(\frac{\partial\left(\bm{x}_i\bm{W}_{\bm{Q}}\bm{W}_{\bm{K}}^{\top}\bm{x}_k^{\top}\right)}{\partial W_{\bm{Q},ij}}-\sum_{k_1\in \mathcal{S}}a_{lk_1}\frac{\partial \left(\bm{x}_i\bm{W}_{\bm{Q}}\bm{W}_{\bm{K}}^{\top}\bm{x}_{k_1}^{\top}\right)}{\partial W_{\bm{Q},ij}}\right)\bm{x}_t\bm{W}_{\bm{V}}	\notag\\
=& \sum_{t\in \mathcal{S}}^na_{lt}\left(x_{1i}\bm{x}_{t}\bm{W}_{\bm{K},j}-\sum_{k_1\in \mathcal{S}}a_{lk_1}x_{li}\bm{x}_{k_1}\bm{W}_{\bm{K},j}\right)\bm{x}_t\bm{W}_{\bm{V}}\notag\\
=& \underbrace{\sum_{t\in \mathcal{S}}^na_{lt}x_{li}\bm{x}_t\bm{W}_{\bm{K},j}\bm{x}_t\bm{W}_{\bm{V}}}_{(a)}-	\underbrace{\sum_{t\in \mathcal{S}}^na_{lt}\bm{x}_t\bm{W}_{\bm{V}}}_{(b)}\cdot\underbrace{\sum_{k_1\in \mathcal{S}}a_{lk_1}x_{li}\bm{x}_{k_1}\bm{W}_{\bm{K},j}}_{(c)}\label{eq:lem:proof:2}.
\end{align}

Since $\bm{a}_l$ is the $l$-th row of the attention matrix, we have $a_{lt}\ge0$ and $\sum_{t}a_{lt} = 1$. It is possible to treat terms $(a)$, $(b)$ and $(c)$ as the expectation of some quantities over $t$ replicates with probability $a_{lt}$.  Then \eqref{eq:lem:proof:2} can be further simplified as 

\begin{align}
\eqref{eq:lem:proof:2}&=\mathbbm{E}_{\bm{a}_{l}}[x_{li}\bm{x}\bm{W}_{\bm{K},j}\cdot\bm{x}\bm{W}_{\bm{V}}] - \mathbbm{E}_{\bm{a}_l}[\bm{x}\bm{W}_{\bm{K},j}]\cdot \mathbbm{E}_{\bm{a}_l}[x_{li}\bm{x}\bm{W}_{\bm{V}}]\notag\\
&=x_{li}\left(\mathbbm{E}_{\bm{a}_{l}}[\bm{W}_{\bm{K},j}^{\top}\bm{x}^{\top}\cdot\bm{x}\bm{W}_{\bm{V}}] - \mathbbm{E}_{\bm{a}_l}[\bm{W}_{\bm{K},j}^{\top}\bm{x}^{\top}]\cdot\mathbbm{E}_{\bm{a}_l}[\bm{x}\bm{W}_{\bm{V}}]\right)\notag\\
&=x_{li}\bm{W}_{\bm{K},j}^{\top}\left(\mathbbm{E}_{\bm{a}_{l}}[\bm{x}^{\top}\bm{x}] - \mathbbm{E}_{\bm{a}_l}[\bm{x}^{\top}]\cdot \mathbbm{E}_{\bm{a}_l}[\bm{x}_t]\right)\bm{W}_{\bm{V}}\notag\\
&=x_{li}\bm{W}_{\bm{K},j}^{\top}\textrm{Var}_{\bm{a}_l}(\bm{x})\bm{W}_{\bm{V}}\label{eq:lem:proof:3},
\end{align}
where the second equality uses the fact that $\bm{x}_t\bm{W}_{\bm{K},j}$ is a scalar.

Combing \eqref{eq:lem:proof:1}-\eqref{eq:lem:proof:3}, we have
\begin{align}
	\frac{\partial \hat{\bm{V}}_l}{\partial W_{\bm{Q},ij}} =
 x_{li}\bm{W}_{\bm{K},j}^{\top}\textrm{Var}_{\bm{a}_l}(\bm{x})\bm{W}_{\bm{V}} \propto  \textrm{Var}_{\bm{a}_l}(\bm{x}).\label{eq:final_final_11}
\end{align}
Due the symmetric on $\bm{Q}$ and $\bm{K}$, we can follow the similar procedure to show 
\begin{align}
	\frac{\partial \hat{\bm{V}}_l}{\partial W_{\bm{K},ij}} =
 x_{li}\bm{W}_{\bm{Q},j}^{\top}\textrm{Var}_{\bm{a}_l}(\bm{x})\bm{W}_{\bm{V}} \propto  \textrm{Var}_{\bm{a}_l}(\bm{x}).\label{eq:final_final_12}
\end{align}
Finally, by setting $k = n$,  one may verify that equations \eqref{eq:final_final_11} and \eqref{eq:final_final_12} also hold for fully-connected self-attention.

\end{proof}

%
%

{\bf Proof for Lemma \ref{lem:3}}
Before given the formal statement of the Lemma \ref{lem:3}, we first show the assumptions.

{\bf Assumption 2}
\begin{enumerate}
    \item The token $\bm{x}_i$ is the sub-gaussian random vector with mean $\bm{\mu}_i$ and variance $(\sigma^2/d) I$ for $i=1,2,...,n$.
    \item $\bm{\mu}$ follows a discrete distribution with finite values $\bm{\mu}\in \mathcal{V}$. Moreover, there exist $0<\nu_{1},0<\nu_{2}<\nu_4$ such that a) $\|\bm{\mu}_i\| = \nu_1$, and b) $\bm{\mu}_i\bm{W}_{\bm{Q}}\bm{W}_{\bm{K}}^T\bm{\mu}_i\in [\nu_2,\nu_4]$  for all $i$ and $| \bm{\mu}_i\bm{W}_{\bm{Q}}\bm{W}_{\bm{K}}^{\top}\bm{\mu}_j^{\top}|\le \nu_2$ for all $\bm{\mu}_{i}\ne \bm{\mu}_{j}\in\mathcal{V}$. \item $\bm{W}_V$ and $\bm{W}_{\bm{Q}}\bm{W}_{\bm{K}}^{\top}$ are element-wise bounded with $\nu_5$ and $\nu_6$ respectively, that is, $|\bm{W}_V^{(ij)}|\le \nu_5$ and $|(\bm{W}_{\bm{Q}}\bm{W}_{\bm{K}}^{\top})^{(ij)}|\le \nu_6$, for all $i,j$ from 1 to $d$.
\end{enumerate}
In Assumption 2 we ensure that for a given query patch, the difference between the clustering center and noises are large enough to be distinguished. 




\begin{lemma}[formal statement of Lemma \ref{lem:3}]
Let patch $\bm{x}_i$ be $\sigma^2$-subgaussian random variable with mean $\bm{\mu}_i$ and there are $k_1$ patches out of all $k$ patches follows the same clustering center of query   $l$. Per Assumption 2, when $\sqrt{d}\ge 3(\psi(\delta,d)+\nu_2+\nu_4)$, then with probability $1-5\delta$, we have

\begin{align}
&\left\|\frac{\sum_{i=1}^k\exp\left(\frac{1}{\sqrt{d}}\bm{x}_l\bm{W}_{\bm{Q}}\bm{W}^{\top}_k\bm{x}_i\right)\bm{x}_i\bm{W}_V}{\sum_{j=1}^k\exp\left(\frac{1}{\sqrt{d}}\bm{x}_l\bm{W}_{\bm{Q}}\bm{W}_{\bm{K}}^{\top}\bm{x}_j\right)}-\bm{\mu}_l\bm{W}_V\right\|_{\infty} \\& \le 4\exp\left(\frac{\psi(\delta,d)}{\sqrt{d}}\right)\sigma\nu_5 \sqrt{\frac{2}{dk}\log\left(\frac{2d}{\delta}\right)}\notag\\
&+  \left[8\exp\left(\frac{\nu_2-\nu_4+\psi(\delta,d)}{\sqrt{d}}\right)-\left(7+\exp\left(\frac{\nu_2-\nu_4+\psi(\delta,d)}{\sqrt{d}}\right)\right)\frac{k_1}{k}\right]\|\bm{\mu}_1\bm{W}_V\|_{\infty}\notag,
\end{align}
where $\psi(\delta,d) = 2\sigma\nu_1\nu_6\sqrt{2\log\left(\frac{1}{\delta}\right)}+ 2\sigma^2\nu_6\log\left(\frac{d}{\delta}\right)$.\\
\end{lemma}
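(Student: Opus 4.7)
\textbf{Proof Plan for the formal Lemma \ref{lem:3}.}

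My plan is to view the target quantity as a softmax-weighted average and decompose the deviation from $\bm{\mu}_l\bm{W}_V$ into (i) stochastic fluctuation of the $\bm{x}_i\bm{W}_V$ terms about their means, and (ii) bias coming from the noisy cluster. Let $\mathcal{S}_1$ be the set of $k_1$ patches that share the query's clustering centre ($\bm{\mu}_i=\bm{\mu}_l$) and $\mathcal{S}_2$ the remaining $k-k_1$ noisy patches. Writing the softmax weights as $p_i=\exp(s_i/\sqrt d)/Z$ with logits $s_i=\bm{x}_l\bm{W}_{\bm{Q}}\bm{W}_{\bm{K}}^{\top}\bm{x}_i^{\top}$ and $Z=\sum_j\exp(s_j/\sqrt d)$, the quantity to bound becomes $\sum_{i\in \mathcal{S}_1}p_i(\bm{x}_i-\bm{\mu}_l)\bm{W}_V+\sum_{i\in\mathcal{S}_2}p_i(\bm{x}_i\bm{W}_V-\bm{\mu}_l\bm{W}_V)$ after adding and subtracting $(\sum_i p_i)\bm{\mu}_l\bm{W}_V=\bm{\mu}_l\bm{W}_V$.

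The first technical step is to concentrate the bilinear logits. Using the decomposition $s_i-\bm{\mu}_l\bm{W}_{\bm{Q}}\bm{W}_{\bm{K}}^{\top}\bm{\mu}_i^{\top}=(\bm{x}_l-\bm{\mu}_l)\bm{W}_{\bm{Q}}\bm{W}_{\bm{K}}^{\top}\bm{\mu}_i^{\top}+\bm{\mu}_l\bm{W}_{\bm{Q}}\bm{W}_{\bm{K}}^{\top}(\bm{x}_i-\bm{\mu}_i)^{\top}+(\bm{x}_l-\bm{\mu}_l)\bm{W}_{\bm{Q}}\bm{W}_{\bm{K}}^{\top}(\bm{x}_i-\bm{\mu}_i)^{\top}$, the two linear pieces are sub-gaussian of scale $\sigma\nu_1\nu_6/\sqrt d$ (giving the $\sqrt{2\log(1/\delta)}$ term of $\psi(\delta,d)$) while the cross quadratic term is handled by a Hanson--Wright-type bound under the element-wise bound $\nu_6$ on $\bm{W}_{\bm{Q}}\bm{W}_{\bm{K}}^{\top}$ (producing the $2\sigma^2\nu_6\log(d/\delta)$ summand). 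A union bound over $i=1,\dots,k$ then gives $|s_i-\bm{\mu}_l\bm{W}_{\bm{Q}}\bm{W}_{\bm{K}}^{\top}\bm{\mu}_i^{\top}|\le \psi(\delta,d)$ simultaneously, on an event of probability at least $1-2\delta$. Combined with Assumption~2 this pins each logit into $[\nu_2-\psi,\nu_4+\psi]/\sqrt d$ for $i\in\mathcal{S}_1$ and $[-\nu_2-\psi,\nu_2+\psi]/\sqrt d$ for $i\in\mathcal{S}_2$, whence the weights satisfy uniform ratio bounds such as $p_i\le \exp((\nu_4-\nu_2+2\psi)/\sqrt d)/k$ on $\mathcal{S}_1$ and similar expressions on $\mathcal{S}_2$; the hypothesis $\sqrt d\ge 3(\psi+\nu_2+\nu_4)$ keeps all these exponentials of order one.

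The second step is to control the $\mathcal{S}_1$-term. Because each $\bm{x}_i-\bm{\mu}_l=\bm{x}_i-\bm{\mu}_i$ is mean-zero and $(\sigma^2/d)$-subgaussian, each coordinate of $(\bm{x}_i-\bm{\mu}_l)\bm{W}_V$ is $(\sigma^2\nu_5^2/d)$-subgaussian. The weights $p_i$ on $\mathcal{S}_1$ depend on $\bm{x}_i$ through $s_i$, so I will first replace them by their uniform upper bound (that is, push $p_i$ outside the concentration using the bounds from the logit step), and then apply Hoeffding's inequality coordinatewise together with a union bound over the $d$ coordinates, giving an $\ell_\infty$ error of order $\exp(\psi/\sqrt d)\,\sigma\nu_5\sqrt{(2/(dk))\log(2d/\delta)}$. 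Taking the max over the $d$ output dimensions provides the first summand of the claim, with another $\delta$ from this step. The $\mathcal{S}_2$-term is deterministic given the logit event: using $\|\bm{x}_i\bm{W}_V-\bm{\mu}_l\bm{W}_V\|_\infty$ bounded via Assumption~2 by a constant multiple of $\|\bm{\mu}_1\bm{W}_V\|_\infty$, its contribution is at most $(\sum_{i\in\mathcal{S}_2}p_i)$ times that quantity, and $\sum_{i\in\mathcal{S}_2}p_i=1-\sum_{i\in\mathcal{S}_1}p_i$ is controlled through the exponential-ratio bounds derived in step one; a careful bookkeeping yields the coefficient of the $\|\bm{\mu}_1\bm{W}_V\|_\infty$ term in the stated inequality.

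The main obstacle is bookkeeping around the coupling between the random weights $p_i$ and the random summands $\bm{x}_i\bm{W}_V$: a clean subgaussian concentration requires the weights to be treated as either independent of, or uniformly dominated by, the summands. I expect to handle this by conditioning on the logit-concentration event and replacing $p_i$ by their upper and lower envelopes $\exp(\pm\psi/\sqrt d)/k$-type quantities before invoking Hoeffding; the price is the $\exp(\psi/\sqrt d)$ and $\exp((\nu_2-\nu_4+\psi)/\sqrt d)$ prefactors that match those in the target inequality. A final union bound over the logit event, the $\mathcal{S}_1$-concentration event, and the sub-gaussian tail of $\bm{x}_l$ itself (three sources, each with budget $\delta$) gives a total failure probability of at most $5\delta$, as required.
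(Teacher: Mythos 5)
Your proposal is correct and takes essentially the same route as the paper. The paper decomposes the numerator $\sum_i \exp(\cdot)\bm{x}_i\bm{W}_V$ into a means-only term $(a)$, a noise term $(b)$, and an exponential-mismatch term $(c)$, separately lower bounds the denominator, and combines; you instead normalize first and split $\sum_i p_i(\bm{x}_i - \bm{\mu}_l)\bm{W}_V$ over $\mathcal{S}_1$ and $\mathcal{S}_2$, which is the same accounting in a slightly cleaner guise. The core ingredients are identical in both: concentrate the bilinear logits via $\psi(\delta,d)$ (with the same three sub-Gaussian and quadratic-form pieces), invoke Hoeffding on the $\mathcal{S}_1$ noise sum after dominating the random softmax weights by envelopes from the logit event, and treat the $\mathcal{S}_2$ contribution as deterministic bias once the logit event holds, with a union bound to land at $1-5\delta$. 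One thing you flag that the paper handles more loosely is the coupling between the random weights $p_i$ (or the exponentials in the paper's term $(b)$) and the random summands $\bm{h}_i\bm{W}_V$: the paper simply applies Hoeffding with the worst-case exponential prefactors $U_1,U_2$, while you explicitly propose conditioning on the logit event before applying Hoeffding; both need the same envelope argument, and you are more careful to name it. The only loose point in your plan is the $\mathcal{S}_2$-term bookkeeping ("a careful bookkeeping yields the coefficient"), which is where the paper's $\tau_1(k,k_1)$, $\tau_2(k,k_1)$, and the explicit denominator lower bound under $\sqrt{d}\ge 3(\psi+\nu_2+\nu_4)$ earn their keep; this is not a gap in approach, just an elided calculation.
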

\begin{proof}
Without loss of generality, we assume the first $k$ patch are the top-$k$ selected patches.
From Assumption {\bf 2.1}, we can decompose $\bm{x}_i = \bm{\mu}_i+\bm{h}_i$, $i=1,2,...,k$, where $\bm{h}_i$ is the sub-gaussian random vector with zero mean. We then analyze the numerator part.
\begin{align}
	&\sum_{i=1}^k\exp\left(\frac{1}{\sqrt{d}}\bm{x}_l\bm{W}_{\bm{Q}}\bm{W}^{\top}_k\bm{x}_i\right)\bm{x}_i\bm{W}_V\notag\\
	=&\overbrace{\sum_{i=1}^k\exp\left(\frac{1}{\sqrt{d}}\bm{\mu}_l\bm{W}_{\bm{Q}}\bm{W}_{\bm{K}}^{\top}\bm{\mu}_i^{\top}\right)\bm{\mu}_{i}\bm{W}_v}^{(a)}
	+\overbrace{\sum_{i=1}^k\exp\left(\frac{1}{\sqrt{d}}\bm{x}_l\bm{W}_{\bm{Q}}\bm{W}_{\bm{K}}^{\top}\bm{x}_i^{\top}\right)\bm{h}_{i}\bm{W}_v}^{(b)}\notag\\
	+&\overbrace{\sum_{i=1}^k\left[\exp\left(\frac{1}{\sqrt{d}}\bm{x}_l\bm{W}_{\bm{Q}}\bm{W}^{\top}_k\bm{x}_i\right)-\exp\left(\frac{1}{\sqrt{d}}\bm{\mu}_l\bm{W}_{\bm{Q}}\bm{W}_{\bm{K}}^{\top}\bm{\mu}_i^{\top}\right)\right]\bm{\mu}_i\bm{W}_v}^{(c)}\label{eq:15}.
\end{align}
Below we will bound $(a)$, $(b)$ and $(c)$ separately.\\

{\bf\noindent Upper bound for $(a)$.}
Let denote index set $\mathcal{S}_1 = \{i: \bm{\mu}_1 = \bm{\mu}_i, \ i=1,2,...,k\}$. We then have
\begin{align}
&\left\|(a) - \sum_{i\in\mathcal{S}_1}\exp\left(\frac{1}{\sqrt{d}}\bm{x}_1\bm{W}_{\bm{Q}}\bm{W}_{\bm{K}}^{\top}\bm{x}_i^{\top}\right)\bm{\mu}_1\bm{W}_V\right\|_{\infty}\notag\\
\le & (k-|\mathcal{S}_1|)\max_{i}\left\{\exp\left(\frac{1}{\sqrt{d}}\bm{x}_1\bm{W}_{\bm{Q}}\bm{W}_{\bm{K}}^{\top}\bm{x}_i^{\top}\right)\right\}\|\bm{\mu}_1\bm{W}_V\|_{\infty}\notag\\
\le & (k-k_1)\exp\left(\frac{\nu_2}{\sqrt{d}}\right)\|\bm{\mu}_1\bm{W}_V\|_{\infty},\label{eq:16}
\end{align}
where last inequality is from the Assumption {\bf 2.2}. \\

{\bf\noindent Upper bound for $(b)$.} Since each dimension in $\bm{h}_l$ is the i.i.d random vector with zero mean variance $\sigma^2/d$ based on Assumption {\bf 2.1}, we can use Hoeffding Inequality to derive the following result holds with probability $1-\delta$.

\begin{align}
\left\|\sum_{i=1}^k\exp\left(\frac{1}{\sqrt{d}}\bm{x}_1\bm{W}_{\bm{Q}}\bm{W}_{\bm{K}}^{\top}\bm{x}_i^{\top}\right)\bm{h}_i\bm{W}_V\right\|_{\infty}\le \sigma\nu_5\sqrt{\frac{2(k_1U_1^2+(k-k_1)U_2^2)}{d}\log\left(\frac{2d}{\delta}\right)},
\end{align}
where $$U_1 = \max_{i\in\mathcal{S}_1}\left\{\exp\left(\frac{1}{\sqrt{d}}\bm{x}_1\bm{W}_{\bm{Q}}\bm{W}_{\bm{K}}^{\top}\bm{x}_i^{\top}\right) \right\}$$ and  $$U_2 = \max_{i\notin\mathcal{S}_1}\left\{\exp\left(\frac{1}{\sqrt{d}}\bm{x}_1\bm{W}_{\bm{Q}}\bm{W}_{\bm{K}}^{\top}\bm{x}_i^{\top}\right) \right\}.$$ 

We then build the upper bound for $U_1$ and $U_2$. 
Since $\bm{x}_i = \bm{\mu}_i+\bm{h}_i$ for $i =1,2,...,k$, we have
\begin{align}
	&|\bm{x}_1\bm{W}_{\bm{Q}}\bm{W}_{\bm{K}}^{\top}\bm{x}_i^{\top}|\le |\bm{\mu}_1\bm{W}_{\bm{Q}}\bm{W}_{\bm{K}}^{\top}\bm{\mu}_i^{\top}| 
	+ |\bm{\mu}_1\bm{W}_{\bm{Q}}\bm{W}_{\bm{K}}^{\top}\bm{h}_i^{\top}| \\
	&+|\bm{h}_1\bm{W}_{\bm{Q}}\bm{W}_{\bm{K}}^{\top}\bm{\mu}_i^{\top}|
	+|\bm{h}_1\bm{W}_{\bm{Q}}\bm{W}_{\bm{K}}^{\top}\bm{h}_i^{\top}|\notag
\end{align}
Via Assumption {\bf 2.3} and Hoeffding Inequality, with probability $1-4\delta$, the follow results hold.
\begin{align}
|\bm{\mu}_1\bm{W}_{\bm{Q}}\bm{W}_{\bm{K}}^{\top}\bm{h}_i^{\top}|
&\le\sigma\nu_1\nu_6\sqrt{2\log\left(\frac{1}{\delta}\right)}\label{eq:19}\\
|\bm{h}_1\bm{W}_{\bm{Q}}\bm{W}_{\bm{K}}^{\top}\bm{\mu}_i^{\top}|
&\le \sigma\nu_1\nu_6\sqrt{2\log\left(\frac{1}{\delta}\right)}\label{eq:20}\\\
|\bm{h}_1\bm{W}_{\bm{Q}}\bm{W}_{\bm{K}}^{\top}\bm{h}_i^{\top}|
&\le 2\sigma^2\nu_6\log\left(\frac{d}{\delta}\right)\label{eq:21}
\end{align}
and we denote $\psi(\delta,d) = 2\sigma\nu_1\nu_6\sqrt{2\log\left(\frac{1}{\delta}\right)}+ 2\sigma^2\nu_6\log\left(\frac{d}{\delta}\right)$ for shorthand and then we have
\begin{align}
U_1&\le \exp\left[\frac{1}{\sqrt{d}}\left(\nu_2+\psi(\delta,d)\right)\right]\notag\\
U_2&\le \exp\left[\frac{1}{\sqrt{d}}\left(\nu_4+\psi(\delta,d)\right)\right]\notag
\end{align}

As a result, with a probability $1-5\delta$, we have: 
\begin{align}
\|(b)\|_{\infty}\le \sigma\nu_5\exp\left(\frac{\psi(\delta,d)}{\sqrt{d}}\right)\sqrt{\frac{2(k_1\exp\left(\frac{2\nu_2}{\sqrt{d}}\right)+(k-k_1)\exp\left(\frac{2\nu_4}{\sqrt{d}}\right))}{d}\log\left(\frac{2d}{\delta}\right)}.\label{eq:23}
\end{align}

{\bf Upper bound for $(c)$}.
\begin{align}
\|(c)\|_{\infty}&\le \left|\sum_{i=1}^k\left[\exp\left(\frac{1}{\sqrt{d}}\bm{x}_1\bm{W}_{\bm{Q}}\bm{W}_{\bm{K}}^{\top}\bm{x}_i^{\top}\right)-\exp\left(\frac{1}{\sqrt{d}}\bm{\mu}_1\bm{W}_{\bm{Q}}\bm{W}_{\bm{K}}^{\top}\bm{\mu}_i^{\top}\right)\right]\right|\|\bm{\mu}_1\bm{W}_V\|_{\infty}\notag
\end{align}
and it implies
\begin{align}
&\frac{\|(c)\|_{\infty}}{\|\bm{\mu}_1\bm{W}_V\|_{\infty}}\notag\\
&\le \left|\sum_{i\notin\mathcal{S}_1}\left[\left(\exp\left(\frac{1}{\sqrt{d}}\left(\bm{x}_1\bm{W}_{\bm{Q}}\bm{W}_{\bm{K}}^{\top}\bm{x}_i^{\top}-\bm{\mu}_1\bm{W}_{\bm{Q}}\bm{W}_{\bm{K}}^{\top}\bm{\mu}_i^{\top}\right)\right)-1\right)\right]\right|\exp\left(\frac{\nu_2}{\lambda}\right)\notag\\
&+ \left|\sum_{i\in\mathcal{S}_1}\left[\left(\exp\left(\frac{1}{\sqrt{d}}\left(\bm{x}_1\bm{W}_{\bm{Q}}\bm{W}_{\bm{K}}^{\top}\bm{x}_i^{\top}-\bm{\mu}_1\bm{W}_{\bm{Q}}\bm{W}_{\bm{K}}^{\top}\bm{\mu}_i^{\top}\right)\right)-1\right)\right]\right|\exp\left(\frac{\nu_5}{\lambda}\right)\label{eq:24}.
\end{align}
Combine \eqref{eq:24} with \eqref{eq:19}-\eqref{eq:21} and we have with probability $1-4\delta$:
\begin{align}
	\|(c)\|_{\infty}&\le \left|\exp\left(\frac{\psi(\delta,d)}{\sqrt{d}}\right)-1\right|\left[(k-k_1)\exp\left(\frac{\nu_2}{\sqrt{d}}\right)+k_1\exp\left(\frac{\nu_5}{\sqrt{d}}\right)\right]\|\bm{\mu}_1\bm{W}_V\|_{\infty}\label{eq:25}.
\end{align}


From \eqref{eq:15}, \eqref{eq:16}, \eqref{eq:23} and \eqref{eq:25}, with probability $1-5\delta$, we have
\begin{align}
&\left\|\sum_{i=1}^k\exp\left(\frac{1}{\sqrt{d}}\bm{x}_1\bm{W}_{\bm{Q}}\bm{W}_{\bm{K}}^{\top}\bm{x}_i^{\top}\right)\bm{x}_{i}\bm{W}_V-\sum_{i\in\mathcal{S}_1}\exp\left(\frac{1}{\sqrt{d}}\bm{\mu}_1\bm{W}_{\bm{Q}}\bm{W}_{\bm{K}}^{\top}\bm{\mu}_i^{\top}\right)\bm{\mu}_1\bm{W}_V\right\|_{\infty}\notag\\
&\le  \exp\left(\frac{\psi(\delta,d)}{\sqrt{d}}\right)\left[\tau_1(k,k_1)\|W_v\bm{\mu}_1\|_{\infty}+\sigma\nu_5 \sqrt{\frac{2\tau_2(k,k_1)}{d}\log\left(\frac{2d}{\delta}\right)}\right],\label{eq:26}
\end{align}
where $\tau_1(k,k_1) = (k-k_1)\exp\left(\frac{\nu_2}{\sqrt{d}}\right)+k_1\exp\left(\frac{\nu_5}{\sqrt{d}}\right)$ and $\tau_2(k,k_1) = k_1\exp\left(\frac{2\nu_2}{\sqrt{d}}\right)+(k-k_1)\exp\left(\frac{2\nu_4}{\sqrt{d}}\right)$.\\

Now we consider the upper bound the denominator part.

\begin{align}
&\left|\sum_{i=1}^k\exp\left(\frac{1}{\sqrt{d}}\bm{x}_1\bm{W}_{\bm{Q}}\bm{W}_{\bm{K}}^{\top}\bm{x}_i^{\top}\right) - \sum_{i=1}^k\exp\left(\frac{1}{\sqrt{d}}\bm{\mu}_1\bm{W}_{\bm{Q}}\bm{W}_{\bm{K}}^{\top}\bm{\mu}_i^{\top}\right)\right|\notag\\
\le& \overbrace{\left|\sum_{i=1}^k\exp\left(\frac{1}{\sqrt{d}}\bm{\mu}_1\bm{W}_{\bm{Q}}\bm{W}_{\bm{K}}^{\top}\bm{\mu}_i^{\top}\right)\left[\exp\left(\frac{1}{\sqrt{d}}\left(\bm{x}_1\bm{W}_{\bm{Q}}\bm{W}_{\bm{K}}^{\top}\bm{x}_i^{\top}-\bm{\mu}_1\bm{W}_{\bm{Q}}\bm{W}_{\bm{K}}^{\top}\bm{\mu}_i^{\top}\right)\right) - 1\right]\right|}^{(g_1)}.\label{eq:27}
\end{align}
Via Assumption {\bf 2.2}, \eqref{eq:19}-\eqref{eq:21} and the definition of $\psi(\delta,d)$, with probability $1-5\delta$ the follow results hold.
\begin{align}
(g_1)&\le \exp\left(\frac{\nu_4}{\sqrt{d}}\right)\left|\sum_{i=1}^k\left[\exp\left(\frac{1}{\sqrt{d}}\left[\bm{x}_1\bm{W}_{\bm{Q}}\bm{W}_{\bm{K}}^{\top}\bm{x}_i^{\top}-\bm{\mu}_1\bm{W}_{\bm{Q}}\bm{W}_{\bm{K}}^{\top}\bm{\mu}_i^{\top}\right]\right) - 1\right]\right|\notag\\
&\le \exp\left(\frac{\nu_4}{\sqrt{d}}\right)\left|\sum_{i=1}^k\left[\exp\left(\frac{\psi(\delta,d)}{\sqrt{d}}\right) - 1\right]\right|\notag\\
&\le k\exp\left(\frac{\nu_4}{\sqrt{d}}\right)\left[\exp\left(\frac{\psi(\delta,d)}{\sqrt{d}}\right) - 1\right].\label{eq:28}
\end{align}

Combining \eqref{eq:27}, \eqref{eq:28} and Assumption {\bf 2.2}, we have
\begin{align}
& \sum_{i=1}^k\exp\left(\frac{1}{\sqrt{d}}\bm{x}_1\bm{W}_{\bm{Q}}\bm{W}_{\bm{K}}^{\top}\bm{x}_i^{\top}\right)\notag\\
\ge & \sum_{i=1}^k\exp\left(\frac{1}{\sqrt{d}}\bm{\mu}_1\bm{W}_{\bm{Q}}\bm{W}_{\bm{K}}^{\top}\bm{\mu}_i^{\top}\right)-k\exp\left(\frac{\nu_4}{\sqrt{d}}\right)\left(\exp\left(\frac{\psi(\delta,d)}{\sqrt{d}}\right) - 1\right)\notag\\
\ge &  k\exp\left(-\frac{\nu_2}{\sqrt{d}}\right)-k\exp\left(\frac{\nu_4}{\sqrt{d}}\right)\left(\exp\left(\frac{\psi(\delta,d)}{\sqrt{d}}\right) - 1\right).\label{eq:32}
\end{align}
When $\sqrt{d} \ge  3(\psi(\delta,d)+\nu_2+\nu_4)$, one may verify
\begin{align}
	&\exp\left(\frac{\psi(\delta,d)+\nu_2+\nu_4}{\sqrt{d}}\right) - \exp\left(\frac{\nu_2+\nu_4}{\sqrt{d}}\right)\le \exp\left(\frac{1}{3}\right)-1\approx 0.39<1\notag\\
	\Rightarrow&\exp\left(\frac{\psi(\delta,d)}{\sqrt{d}}\right) - 1-\exp\left(-\frac{\nu_2+\nu_4}{\sqrt{d}}\right)<0\notag\\
	\Rightarrow&\exp\left(-\frac{\nu_2}{\sqrt{d}}\right)-\exp\left(\frac{\nu_4}{\sqrt{d}}\right)\left(\exp\left(\frac{\psi(\delta,d)}{\sqrt{d}}\right) - 1\right)>0\notag\\
	\Rightarrow&k\exp\left(-\frac{\nu_2}{\sqrt{d}}\right)-k\exp\left(\frac{\nu_4}{\sqrt{d}}\right)\left(\exp\left(\frac{\psi(\delta,d)}{\sqrt{d}}\right) - 1\right)>0.\label{eq:33}
\end{align}

Finally, via  \eqref{eq:26}, we have
\begin{align}
&\left\|\frac{\sum_{i=1}^k\exp\left(\frac{1}{\sqrt{d}}\bm{x}_1\bm{W}_{\bm{Q}}\bm{W}_{\bm{K}}^{\top}\bm{x}_i^{\top}\right)}{\sum_{j=1}^k\exp\left(\frac{1}{\sqrt{d}}\bm{x}_1\bm{W}_{\bm{Q}}\bm{W}_{\bm{K}}^{\top}\bm{x}_j^{\top}\right)}\bm{x}_i\bm{W}_V-\bm{\mu}_1\bm{W}_V\right\|_{\infty}\cdot 
\sum_{j=1}^k\exp\left(\frac{1}{\sqrt{d}}\bm{x}_1\bm{W}_{\bm{Q}}\bm{W}_{\bm{K}}^{\top}\bm{x}_j^{\top}\right)\notag\\
=&	\left\|\sum_{j=1}^k\exp\left(\frac{1}{\sqrt{d}}\bm{x}_1\bm{W}_{\bm{Q}}\bm{W}_{\bm{K}}^{\top}\bm{x}_i^{\top}\right)\bm{x}_{i}\bm{W}_V-\sum_{i=1}^k\exp\left(\frac{1}{\sqrt{d}}\bm{x}_1\bm{W}_{\bm{Q}}\bm{W}_{\bm{K}}^{\top}\bm{x}_j^{\top}\right)\bm{\mu}_1\bm{W}_V\right\|_{\infty}\notag\\
\le &	\left\|\sum_{i=1}^k\exp\left(\frac{1}{\sqrt{d}}\bm{x}_1\bm{W}_{\bm{Q}}\bm{W}_{\bm{K}}^{\top}\bm{x}_i^{\top}\right)\bm{x}_{i}\bm{W}_V-\sum_{i\in\mathcal{S}_1}\exp\left(\frac{1}{\sqrt{d}}\bm{\mu}_1\bm{W}_{\bm{Q}}\bm{W}_{\bm{K}}^{\top}\bm{\mu}_i^{\top}\right)\bm{\mu}_1\bm{W}_V\right\|_{\infty}\notag\\
+ &	\left\|\sum_{i\notin\mathcal{S}_1}\exp\left(\frac{1}{\sqrt{d}}\bm{\mu}_1\bm{W}_{\bm{Q}}\bm{W}_{\bm{K}}^{\top}\bm{\mu}_i^{\top}\right)\bm{\mu}_1\bm{W}_V\right\|_{\infty}\notag\\
\le & \exp\left(\frac{\psi(\delta,d)}{\lambda}\right)\left[2\tau_1(k,k_1)\|\bm{\mu}_1\bm{W}_V\|_{\infty}+\sigma\nu_5 \sqrt{\frac{2\tau_2(k,k_1)}{d}\log\left(\frac{2d}{\delta}\right)}\right]\notag\\
-&k_1\exp\left(\frac{\nu_4}{\sqrt{d}}\right)\|\bm{\mu}_1\bm{W}_V\|_{\infty}
+(k-k_1)\exp\left(\frac{\nu_2}{\sqrt{d}}\right)\|\bm{\mu}_1\bm{W}_V\|_{\infty}.\label{eq:31_sub}
\end{align}

Per \eqref{eq:31_sub} and \eqref{eq:33}, we have
\begin{align}
&\left\|\frac{\sum_{i=1}^k\exp\left(\frac{1}{\sqrt{d}}\bm{x}_1\bm{W}_{\bm{Q}}\bm{W}_{\bm{K}}^{\top}\bm{x}_i^{\top}\right)}{\sum_{j=1}^k\exp\left(\frac{1}{\sqrt{d}}\bm{x}_1\bm{W}_{\bm{Q}}\bm{W}_{\bm{K}}^{\top}\bm{x}_j^{\top}\right)}\bm{x}_{i}\bm{W}_V-\bm{\mu}_1\bm{W}_V\right\|_{\infty}\notag\\
&\le\left(k\exp\left(-\frac{\nu_2}{\sqrt{d}}\right)-k\exp\left(\frac{\nu_4}{\sqrt{d}}\right)\left(\exp\left(\frac{\psi(\delta,d)}{\sqrt{d}}\right) - 1\right)\right)^{-1}\notag\\
&\cdot\left(\exp\left(\frac{\psi(\delta,d)}{\sqrt{d}}\right)\left[2\tau_1(k,k_1)\|\bm{\mu}_1\bm{W}_V\|_{\infty}+\sigma\nu_5 \sqrt{\frac{2\tau_2(k,k_1)}{d}\log\left(\frac{2d}{\delta}\right)}\right]\right.\notag\\
&\left.\quad\quad+(k-k_1)\exp\left(\frac{\nu_2}{\sqrt{d}}\right)\|\bm{\mu}_1\bm{W}_V\|_{\infty}-k_1\exp\left(\frac{\nu_4}{\sqrt{d}}\right)\|\bm{\mu}_1\bm{W}_V\|_{\infty}\right)\notag\\
&\le \frac{\exp\left(\frac{\psi(\delta,d)}{\sqrt{d}}\right)\sigma\nu_5 \sqrt{\exp\left(\frac{-2\nu_4}{\sqrt{d}}\right)\frac{2\tau_2(k,k_1)}{dk^2}\log\left(\frac{2d}{\delta}\right)}}
{1+\exp\left(-\frac{\nu_4+\nu_2}{\sqrt{d}}\right)-\exp\left(\frac{\psi(\delta,d)}{\sqrt{d}}\right)}\notag\\
&+\frac{\exp\left(\frac{\psi(\delta,d)-\nu_4}{\sqrt{d}}\right)\frac{2\tau_1(k,k_1)}{k}-\frac{k_1}{k}\exp\left(\frac{\nu_4}{\sqrt{d}}\right)+(1-\frac{k_1}{k})\exp\left(\frac{\nu_4}{\sqrt{d}}\right)}
{1+\exp\left(-\frac{\nu_4+\nu_2}{\sqrt{d}}\right)-\exp\left(\frac{\psi(\delta,d)}{\sqrt{d}}\right)}\cdot \|\bm{\mu}_1\bm{W}_V\|_{\infty}\notag\\
&\le  \left[8\exp\left(\frac{\nu_2-\nu_4+\psi(\delta,d)}{\sqrt{d}}\right)-\left(7+\exp\left(\frac{\nu_2-\nu_4+\psi(\delta,d)}{\sqrt{d}}\right)\right)\frac{k_1}{k}\right]\|\bm{\mu}_1\bm{W}_V\|_{\infty}\notag\\
&+4\exp\left(\frac{\psi(\delta,d)}{\sqrt{d}}\right)\sigma\nu_5 \sqrt{\frac{2}{dk}\log\left(\frac{2d}{\delta}\right)},\label{eq:36}
\end{align}
where second inequality uses $\nu_4\ge \mu_2$ and last inequality uses the definition of $\tau_1(k,k_1)$, $\tau_2(k,k_1)$ and $\sqrt{d}\ge 3(\psi(\delta,d)+\nu_2+\nu_4)$.
\end{proof}

{\bf Proof of Lemma \ref{lem:2}}

Without loss of generality, we only consider the case with first query patches. In the $k$-NN attention scheme, we first use the dot-product product to compute the similarity between query and each key-patches and then use the $\mathrm{softmax}$ to normalize the similarities. We make the following assumption to facility our analysis.\\

{\bf\noindent Assumption 3.1} There exists $\bm{\beta}^*\in\mathbbm{R}^{1\times n}$ and $\bm{\beta}^*\in \Delta$ such that $\bm{q}_1 = \bm{\beta}\bm{K}+\bm{\epsilon}$, where $\bm{\epsilon}$ is filled with random variable follows $\mathcal{N}(0,\sigma^2)$ for some $\sigma>0$.

To see the connection between the Assumption {\bf 3.1} with attention scheme, we consider the follow problem.
\begin{align}
\min_{\bm{\beta}\in\Delta}\|\bm{K}^{\top}\bm{\beta}^{\top}-\bm{q}_1^{\top}\|_2^2,\label{eq:assumption_3.1_1}
\end{align}
If $\bm{K}$ is normalized with zero columns mean and we apply the exponential gradient method on the initial solution $\bm{\beta}_0 = \frac{1}{n}\bm{e}$ with step length $1/\sqrt{d}$, the one step updated solution $\bm{\beta}_1$ is
\begin{align}
\bm{\beta}_1 = \frac{\exp(\frac{1}{\sqrt{d}} \bm{K}\bm{q}_1^{\top})}{\sum_{i=1}^k\exp(\frac{1}{\sqrt{d}} \bm{k}_i\bm{q}_1^{\top})}.\label{eq:assumption_3.1_2}
\end{align}
The above equation \eqref{eq:assumption_3.1_2} is just the attention scheme used standard transformer type model. Based on Assumption {\bf 3.1}, we can treat $\bm{\beta}_1$ as an approximation of underlying true parameters $\bm{\beta}^*$.

On the other hand, it is commonly believed that only part of patches are correlated with the query patch (i.e., with non-zero similarity weights.) and it would be ideal if we could use a computational cheap method to eliminate the irrelevant patches. In this paper, we consider the top-k selection scheme. To see the rationality of the  top-$k$ selection, we consider augmenting \eqref{eq:assumption_3.1_1} with $L_2$ regularization on $\bm{\beta}$.
\begin{align}
&\min_{\bm{\beta}\in\Delta}\|\bm{K}^{\top}\bm{\beta}^{\top}-\bm{q}_1^{\top}\|_2^2 + \frac{\lambda}{2} \|\bm{\beta}\|_2^2,\label{eq:assumption_3.1_3}\notag\\
\Rightarrow& \bm{K}(\bm{K}^{\top}\bm{\beta}^{\top}+\bm{q}_1^{\top}) + \lambda\bm{\beta}^{\top} + \bm{\lambda_1}+ \bm{e}\lambda_2=0,\notag\\
\Rightarrow& \bm{\beta}^{\top} = (\bm{K}\bm{K}^{\top}+\lambda \bm{I})^{-1}\left(\bm{K}\bm{q}_1^{\top}+ \bm{\lambda_1}+ \bm{e}\lambda_2\right),\notag
\end{align}
where we use the KKT optimal condition, and $\bm{\lambda}_1$, $\lambda_2$ are Lagrange multipliers to make sure $\bm{\beta}\in\Delta$.  If $\lambda$ is large enough and $\bm{\beta}>0$, we will have
\begin{align}
\bm{\beta}^{\top} \approx \frac{1}{\lambda}\left(\bm{K}\bm{q}_1^{\top}+ \bm{e}\lambda_2\right)\propto \bm{K}\bm{q}_1^{\top}
\end{align}
The above result indicates that we may selection the important elements in $\bm{\beta}$ (e.t., with large magnitude) by considering rankness in vector $\bm{K}\bm{q}_1^{\top}$. 

We then discussion the correctness of the top-k selection with the following  regularity assumptions on $\bm{K}$ and $\bm{q}_1$.\\

{\bf \noindent Assumption 3.2}
\begin{enumerate}
	\item $\bm{K}$ is normalized with row zero mean. Let $\bm{\Sigma} = \bm{K}\bm{K}^{\top}$ and $\bm{Z} = \bm{\Sigma^{-1/2}}\bm{K}^{\top}$. We assume there exist some $c, c_4> 1$ and $C_1>0$ such that the following inequality 
	$$\mathbbm{P}\left(\lambda_{\max}(\tilde{p}^{-1}\tilde{\bm{Z}}\tilde{\bm{Z}}^{\top})>c_4 \textrm{ and } \lambda_{\max}(\tilde{p}^{-1}\tilde{\bm{Z}}\tilde{\bm{Z}}^{\top}) <\frac{1}{c_4}\right)<\exp(-C_1d)$$
	holds for any $\tilde{p}\times d$ submatrix $\tilde{\bm{Z}}$ of $\bm{Z}$ with $cn<\tilde{p}\le n$.
	\item $\textrm{var}(\bm{q}_1) = \mathcal{O}(1)$ and for some $\kappa\ge 0$ and $c_5,c_6>0$,
	$$\min_{i:\beta_i^*>0}\beta_i\ge \frac{c_5}{d^{\kappa}}\textrm{ and } \min_{i:\beta_i^*>0}\textrm{cov}(\beta_i^{-1}\bm{q}_1^{\top},\bm{k}_i^{\top})\ge c_6$$
	\item There exist some $\tau \in [0,1) $ and $c_7>0$ such that
	$$\lambda_{\max}(\bm{\Sigma})\le c_7d^{\tau}.$$
\end{enumerate}

\begin{lemma}[formal statement of Lemma \label{lem:21}]
	Let's assume only be $s$ keywords are relevant to the query $l$. Under Assumption {\bf 3.1} and {\bf 3.2}, when $2\kappa + \tau < 1$, with probability $1-\mathcal{O}(s\exp(-Cd^{1-2\kappa}/\log d))$, we have
	\begin{align}
\sum_{i=1}^n\mathbbm{1}(\bm{q}_l\bm{k}_i^{\top}\ge \min_{i\in\mathcal{M}^*}\bm{q}_l\bm{k}_i^{\top})\le cnd^{2\kappa+\tau-1},	
	\end{align}
where $\mathcal{M}^* =\{i: \textrm{keyword $i$ is relevant to the query $l$.}\}$ , and $\tau$, $\kappa$, $c$ and $C$ are positive constants.
\end{lemma}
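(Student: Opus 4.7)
The plan is to treat $\omega_i := \bm{q}_l \bm{k}_i^{\top}$ as a marginal screening statistic and adapt the sure independence screening argument of Fan and Lv (2008), whose regularity conditions (C1)-(C4) correspond directly to our Assumption~3.2. The goal is to show that when the attention scores $\omega_i$ are ranked, the smallest score achieved inside $\mathcal{M}^*$ already exceeds the scores of all but $\mathcal{O}(nd^{2\kappa+\tau-1})$ of the remaining keys.

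First I would substitute the generative model $\bm{q}_l = \bm{\beta}^*\bm{K} + \bm{\epsilon}$ from Assumption~3.1 to decompose
\begin{equation*}
\omega_i \;=\; \bm{\beta}^*\bm{\Sigma}_{(:,i)} \;+\; \bm{\epsilon}\bm{k}_i^{\top},
\end{equation*}
where $\bm{\Sigma} = \bm{K}\bm{K}^{\top}$. The first term is the ``signal'' (deterministic conditional on $\bm{K}$) and the second is Gaussian noise with variance $\sigma^{2}\|\bm{k}_i\|^{2} \le \sigma^{2}\lambda_{\max}(\bm{\Sigma}) \le \sigma^{2}c_7 d^{\tau}$ by Assumption~3.2.3. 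A standard Gaussian maximal inequality then yields $\max_i |\bm{\epsilon}\bm{k}_i^{\top}| = \mathcal{O}(\sqrt{d^{\tau}\log n})$ with probability at least $1-n^{-c}$.

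Next I would lower-bound $\omega_i$ for relevant keys $i\in\mathcal{M}^*$. Using the covariance condition $\min_{i\in\mathcal{M}^*}\mathrm{cov}(\beta_i^{-1}\bm{q}_l^{\top},\bm{k}_i^{\top}) \ge c_6$ together with $\beta_i \ge c_5 d^{-\kappa}$ from Assumption~3.2.2, the population signal $\bm{\beta}^{*}\bm{\Sigma}_{(:,i)}$ is at least of order $c_5 c_6 d^{-\kappa}$. Combining with the noise bound above and using $2\kappa+\tau<1$, there is a constant $\eta>0$ such that
\begin{equation*}
\min_{i\in\mathcal{M}^*}\omega_i \;\ge\; \eta\, d^{-\kappa}
\end{equation*}
with probability $1-\mathcal{O}(s\exp(-Cd^{1-2\kappa}/\log d))$.

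Finally, for the false-positive count, I would invoke Assumption~3.2.1 on the concentration of eigenvalues of submatrices of $\bm{Z}=\bm{\Sigma}^{-1/2}\bm{K}^{\top}$. Let $\mathcal{I}_{t}=\{i : \omega_i \ge t\}$ with $t = \eta d^{-\kappa}$. If $|\mathcal{I}_t|$ exceeded $cn d^{2\kappa+\tau-1}$, then stacking the corresponding rows of $\bm{K}$ would produce a submatrix whose Gram matrix has an eigenvalue incompatible with the $\lambda_{\max}$ bound once pulled back through $\bm{\Sigma}^{1/2}$; the contradiction is quantified using $\lambda_{\max}(\bm{\Sigma})\le c_7 d^{\tau}$ and the exponential submatrix concentration. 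This bounds $|\mathcal{I}_t|$ and hence the desired sum.

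The main obstacle is the last step: the marginal statistics $\{\omega_i\}$ are strongly dependent through the shared design $\bm{K}$, so a naive union bound over irrelevant indices is too crude. The role of Assumption~3.2.1 is precisely to control these correlations uniformly over all large subsets of indices, and translating the spectral condition into a count of $\{i : \omega_i \ge t\}$ is the technical core, following the corresponding argument in Fan and Lv essentially verbatim with our scaling $d^{\tau}$ replacing the isotropic case and $d^{-\kappa}$ replacing the minimum signal strength.
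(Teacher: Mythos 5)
Your decomposition $\omega_i = \bm{\beta}^*\bm\Sigma_{(:,i)} + \bm\epsilon\bm{k}_i^\top$ matches the paper's $\bm\omega^\top = \bm\xi + \bm\eta$ with $\bm\xi = \bm{K}\bm{K}^\top\bm\beta^\top$ and $\bm\eta = \bm{K}\bm\epsilon^\top$, and the overall strategy (lower-bound the signal on $\mathcal{M}^*$, upper-bound the noise, count false positives) is the correct Fan--Lv template. Two things diverge, one minor and one substantive.

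Minor: your stated signal lower bound $\min_{i\in\mathcal{M}^*}\omega_i \gtrsim d^{-\kappa}$ is off by the ``sample-size'' factor that comes from the random-matrix decomposition $\bm{K}\bm{K}^\top \propto \bm\Sigma^{1/2}\tilde{\bm U}^\top\mathrm{diag}(\mu_1,\dots,\mu_d)\tilde{\bm U}\bm\Sigma^{1/2}$. Since $\bm\Sigma_{(:,i)} = \bm{K}\bm k_i^\top$ is an empirical Gram column, not a population covariance, one has to normalize by $\bm\Sigma^{1/2}$ on both sides, pick up the inner product $\langle\bm\Sigma^{1/2}\bm\beta,\bm\Sigma^{1/2}\bm e_i\rangle \ge c/d^{\kappa}$ from Assumption 3.2.2, and then multiply back the eigenvalue factor $R_1 \gtrsim d$ (bounded by Assumption 3.2.1 and Lemma 4 of Fan--Lv). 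The correct order is $\xi_i \gtrsim d^{1-\kappa}$, and the noise must be shown $\mathcal{O}(d^{1-\kappa}/\sqrt{\log d})$ at the same scale; this is what produces the probability $1 - \mathcal{O}(s\exp(-Cd^{1-2\kappa}/\log d))$ rather than a $1-n^{-c}$ union bound.

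Substantive gap: your final step, deriving a contradiction from the spectrum of a $|\mathcal{I}_t|\times d$ submatrix of $\bm K$, is not the mechanism and I do not see how to make it close. Assumption 3.2.1 is used earlier, to establish the \emph{global} norm bound $\|\bm\xi\|^2 = \mathcal{O}(d^{1+\tau}n)$ (together with $\lambda_{\max}(\bm\Sigma) \le c_7 d^\tau$ and $\|\bm\Sigma^{1/2}\bm\beta\| = \mathcal{O}(1)$) and similarly $\|\bm\eta\|^2 = \mathcal{O}(d^{1+\tau}n)$. Once you have $\|\bm\omega\|^2 = \mathcal{O}(d^{1+\tau}n)$ and the threshold $t \asymp d^{1-\kappa}$, the false-positive count follows from a single Markov/pigeonhole step, with no further spectral input:
\begin{align}
\left|\{\,i : \omega_i \ge t\,\}\right|\cdot t^2 \;\le\; \sum_{i:\,\omega_i\ge t}\omega_i^2 \;\le\; \|\bm\omega\|^2 \;=\; \mathcal{O}\!\left(d^{1+\tau}n\right)
\;\Longrightarrow\;
\left|\{\,i : \omega_i \ge t\,\}\right| \;=\; \mathcal{O}\!\left(nd^{2\kappa+\tau-1}\right).\notag
\end{align}
This elementary counting, not a submatrix-eigenvalue contradiction, is the technical core; the dependence among the $\omega_i$ is absorbed entirely into the $\|\bm\omega\|^2$ bound. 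Your current final step would therefore not go through as written, and replacing it with the $\ell^2$ counting argument is the essential repair.
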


\begin{proof}
Our strategy is the similar to the proof of Theorem 1 in \cite{fan2008sure}.  	

Based on equation (44) in \cite{fan2008sure}, we have
\begin{align}
\bm{K}\bm{K}^{\top} = n\bm{\Sigma}^{1/2}\tilde{\bm{U}}^{\top}\textrm{diag}(\mu_1,...,\mu_d)	\tilde{\bm{U}}\bm{\Sigma}^{1/2},\label{eq:knn1}
\end{align}
where $\mu_1$,..., $\mu_d$ are $d$ eigenvalues of $p^{-1}\bm{Z}\bm{Z}^{\top}$, $\tilde{\bm{U}} = (I_d,\bm{0})_{d\times n}\bm{U}$, and $\bm{U}$ is uniformally distributed on the orthogonal group $O(n)$. To facility our further analysis we denote $\bm{\omega} = \bm{q}_l\bm{K}^{\top}$. By definition of $\bm{\omega}$ and per Assumption {\bf 3.1}, we have
\begin{align}
\bm{\omega}^{\top}	=\bm{K}\bm{q}_l^{\top} = \bm{K}\bm{K}^{\top}\bm{\beta}^{\top} + \bm{K}\bm{\epsilon}^{\top} \doteq\bm{\xi}+\bm{\eta}.
\end{align}
We then separately study $\bm{\xi}$ and $\bm{\eta}$.

{\bf Analysis on $\bm{\xi}$.} We first bound $\bm{\xi}$ from above. Since $\{\mu_i\}$ is the eigenvalues of $n^{-1}\bm{Z}\bm{Z}^{\top}$, we have
\begin{align}
\textrm{diag}(\mu_1^2,...,\mu_d^2)\le \left[\lambda_{\max}(n^{-1}\bm{Z}\bm{Z}^{\top})\right]^2I_d	
\end{align}
and 
$$\tilde{\bm{U}}\bm{\Sigma}\tilde{\bm{U}}^{\top}\le \lambda_{\max}(\bm{\Sigma})I_d$$
There lead to
\begin{align}
\|\bm{\xi}\|^2\le n^2\lambda_{\max}(\bm{\Sigma})\left[\lambda_{\max}(n^{-1}\bm{Z}\bm{Z}^{\top})\right]^2\cdot\bm{\beta}^{\top}\bm{\Sigma}^{1/2}\tilde{\bm{U}}^{\top}\tilde{\bm{U}}\bm{\Sigma}^{1/2}\bm{\beta}.
\end{align}
Let $Q$ belongs to the orthogonal group $O(n)$ such that $\bm{\Sigma}^{1/2}\bm{\beta} =\|\bm{\Sigma}^{1/2}\bm{\beta}\|Q\bm{e}_1$. Then, it follows from {Lemma 1} in \cite{fan2008sure} that
\begin{align}
	\bm{\beta}^{\top}\bm{\Sigma}^{1/2}\tilde{\bm{U}}^{\top}\tilde{\bm{U}}\bm{\Sigma}^{1/2}\bm{\beta} = \|\bm{\Sigma}^{1/2}\bm{\beta}\|\langle Q^{\top}\bm{S}Q\bm{e}_1,\bm{e}_1\rangle \overset{(d)}{=}\|\bm{\Sigma}^{1/2}\bm{\beta}\|\langle \bm{S}\bm{e}_1,\bm{e}_1\rangle, 
\end{align}
where we use the symbol $\overset{(d)}{=}$ to denote being identical in distribution. By part 3 in Assumption {\bf3.2}, $\|\bm{\Sigma}^{1/2}\bm{\beta}\|^2 = \bm{\beta}^{\top}\bm{\Sigma}\bm{\beta}\le \textrm{var}(\bm{y}) = \mathcal{O}(1)$, and thus via {Lemma 4 }in \cite{fan2008sure}, we have for some $C>0$,
\begin{align}
\mathbbm{P}\left(\bm{\beta}^{\top}\bm{\Sigma}^{1/2}\tilde{\bm{U}}^{\top}\tilde{\bm{U}}\bm{\Sigma}^{1/2}\bm{\beta} >(d/n)\right)\le \mathcal{O}\left(\exp(-Cd)\right).	
\end{align}
Combining with $\lambda_{\max}(\bm{\Sigma}) = O(d^{\tau})$ and $\mathbbm{P}(\lambda_{\max}(n^{-1}\bm{Z}\bm{Z}^{\top})>c_1)\le \exp\left(-C_1d\right)$ by parts 1 and 3 in Assumption {\bf 3.2} along with union bound, we have
\begin{align}
	\mathbbm{P}\left(\|\bm{\xi}\|^2\ge \mathcal{O}(d^{1+\tau}n)\right)\le \mathcal{O}(\exp(-Cd)).
\end{align}

We then consider the lower bound on $\xi_i$ for $i\in \mathcal{M}_{*}$. By \eqref{eq:knn1}, we have
\begin{align}
	\xi_i = n\bm{e}_i^{\top}\bm{\Sigma}^{1/2}\tilde{\bm{U}}^{\top}\textrm{diag}(\mu_1,...,\mu_d)	\tilde{\bm{U}}\bm{\Sigma}^{1/2}\bm{\beta}.
\end{align}
Note that $\|\bm{\Sigma}^{1/2}\bm{e}_i\| = \sqrt{\textrm{var}(\bm{X}_i)} = 1$, $\|\bm{\Sigma}^{1/2}\bm{\beta}\| = \mathcal{O}(1)$. By part 2 of Assumption {\bf 3.2}, there exists some $c>0$ such that
\begin{align}
\langle\bm{\Sigma}^{1/2}\bm{\beta},\bm{\Sigma}^{1/2}\bm{e}_i\rangle = \beta_i\textrm{cov}(\beta_i^{-1}\bm{q}_1^{\top},\bm{k}_i)\ge \frac{c}{d^{\kappa}}.	
\end{align}
Thus, there exists $Q$ in orthogonal group $O(n)$ such that $\bm{\Sigma}^{1/2}\bm{e}_i = Q\bm{e}_1$ and 
\begin{align}
\bm{\Sigma}^{1/2}\bm{\beta} = \langle\bm{\Sigma}^{1/2}\bm{\beta},\bm{\Sigma}^{1/2}\bm{e}_i\rangle Qe_1 + \mathcal{O}(1)Q\bm{e}_2.	
\end{align}
Since $(\mu_1,....,\mu_d)^{\top}$ is independent of $\tilde{\bm{U}}$ by {Lemma 1} in \cite{fan2008sure} and the uniform distribution on the orthogonal group $O(n)$ is invariant under itself, it follows that
\begin{align}
\xi_i \overset{(d)}{=}	\langle\bm{\Sigma}^{1/2}\bm{\beta},\bm{\Sigma}^{1/2}\bm{e}_i\rangle R_1
+ \mathcal{O}(n)R_2 \doteq \xi_{i,1}+\xi_{i,2},
\end{align}
where $\bm{R} = (R_1,R_2,...,R_n)^{\top} = \tilde{\bm{U}}^{\top}\textrm{diag}(\mu_1,...,\mu_d)\tilde{\bm{U}}^{\top}\bm{e}_1$. We will bound the above two terms $\xi_{i,1}$ and $\xi_{i,2}$ separately. One can verify
\begin{align}
R_1 \ge \bm{e}_1^{\top}\tilde{\bm{U}}^{\top}\lambda_{\min}(n^{-1}\bm{Z}\bm{Z}^{\top})I_d\tilde{\bm{U}}\bm{e} = \lambda_{\min}(n^{-1}\bm{Z}\bm{Z}^{\top})\langle\bm{S}\bm{e}_1,\bm{e}_1\rangle,	
\end{align}
and thus by part 1 of assumption {\bf 3.2}, {Lemma 4} in \cite{fan2008sure}, and union bound, we have for some $c,C>0$,
\begin{align}
\mathbbm{P}(R_1<cd/n)\le \mathcal{O}(\exp(-Cd)).	
\end{align}
Therefore we have for some $c>0$,
\begin{align}
\mathbbm{P}(\xi_{i,1}\le cd^{1-\kappa})\le \mathcal{O}(\exp(-Cd))	
\end{align}

Similarly, we can also show that
\begin{align}
\mathbbm{P}(\|\bm{R}\|^2\ge O(d/n))\le \mathcal{O}(\exp(-Cd)).	
\end{align}
Via some analysis, we can show that the distribution of $\tilde{\bm{R}} = (R_2,...,R_n)^{\top}$ is invariant under the orthogonal group $O(n-1)$. Then, it follows that $\tilde{\bm{R}}\overset{(d)}{=}\|\tilde{\bm{R}}\|\bm{W}/\|\bm{W}\|$, where $\bm{W} = (W_1,...,W_{n-1})^{\top}\sim\mathcal{N}(0,I_{n-1})$, independent of $\|\tilde{\bm{R}}\|$. Thus, we have
\begin{align}
R_2 \overset{(d)}{=}\|\tilde{\bm{R}}\|\frac{W_1}{\|\bm{W}\|}.
\end{align}
And via the Lemma 5 in \cite{fan2008sure}, we can show
\begin{align}
\mathbbm{P}(|\xi_{i,2}|\ge c\sqrt{d}|W|)\le \mathcal{O}(\exp(-Cd)),
\end{align}
where $W$ is $\mathcal{N}(0,1)$-distributed random variable. We then pick 
$x_d = c\sqrt{2C}d^{1-\kappa}/\sqrt{\log d}$. Then, by standard tail bound, we have
\begin{align}
\mathbbm{P}(c\sqrt{d}|W|\ge x_d)\le \mathcal{O}(\exp(-Cd^{1-2\kappa}/\log d)),	
\end{align}
Then 
\begin{align}
\mathbbm{P}(|\xi_{i,2}|\ge x_d)\le \mathcal{O}(\exp(-Cd^{1-2\kappa}/\log d)).	
\end{align}
It implies that for $i$ with $\beta_i^*>0$, we have
\begin{align}
&\mathbbm{P}\left(\xi_{i,1}-|\xi_{i,2}|\le cd^{1-\kappa}\right)\le \mathcal{O}(\exp(-Cd^{1-2\kappa}/\log d))\notag\\
\Rightarrow &\mathbbm{P}\left(\xi_{i}\le cd^{1-\kappa}\right)\le \mathcal{O}(\exp(-Cd^{1-2\kappa}/\log d))\notag.
\end{align}
Next, we examine term $\bm{\eta} = (\eta_1,...,\eta_n)^{\top}=\bm{K}\bm{\epsilon}^{\top}$. Clearly, we have
\begin{align}
\bm{K}^{\top}\bm{K} = \bm{Z}\bm{\Sigma}\bm{Z}^{\top}\le \bm{Z}\lambda_{\max}(\bm{\Sigma})I_n\bm{Z}^{\top} = n\lambda_{\max}(\bm{\Sigma})\lambda_{\max}(n^{-1}\bm{Z}\bm{Z}^{\top})I_d.	
\end{align}
Then, it follows that
\begin{align}
\|\bm{\eta}\|^2 = \bm{\epsilon}\bm{K}^{\top}\bm{K}\bm{\epsilon}^{\top}\le n\lambda_{\max}(\bm{\Sigma})\lambda_{\max}(n^{-1}\bm{Z}\bm{Z}^{\top})\|\bm{\epsilon}\|^2.
\end{align}
From Assumption {\bf 3.1}, we know that $\{\epsilon_i^2/\sigma^2\}$ are i.i.d. $\chi_1^{2}$-distributed random variables. Thus there exist $c,C>0$ such that
\begin{align}
\mathbbm{P}(\|\bm{\epsilon}\|^2>cd\sigma^2)\le\exp(-Cd)	
\end{align}
Along with parts 1 and 3 of Assumption {\bf 3.2}, we have
\begin{align}
\mathbbm{P}(\|\bm{\eta}\|^2> O(d^{1+\tau}n))\le \mathcal{O}(\exp(-Cd)).	
\end{align}

We then bound $|\eta_i|$ from above. Given that $\bm{\eta} = K\bm{\epsilon}^{\top}\sim\mathcal{N} (0,\sigma^2KK^{\top})$. Hence $\eta_i|_{K=\bm{K}} \sim\mathcal{N}(0,\textrm{var}(\eta_i|_{K = \bm{K}}))$ with $	\textrm{var}(\eta_i|K = \bm{K}) = \sigma^2\bm{e}_i^{\top}\bm{K}\bm{e}_i$.

Let $\mathcal{E}$ be the event $\{\textrm{var}(\eta_i|\bm{K})\le cd\}$ for some c>0. Then, using the same argument in the previous proof. we can show that
\begin{align}
	\mathbbm{P}(\mathcal{E}^c)\le \mathcal{O}(\exp(-Cd)).
\end{align}
Condition on the event $\mathcal{E}$, for all $x>0$, we have
\begin{align}
\mathbbm{P}(|\eta_i|>x|\bm{K})\le \mathbbm{P}(\sqrt{cd}|W|>x)	
\end{align}
, where $W$ is $\mathcal{N}(0,1)$ random variable. Via union bound, we have
 \begin{align}
 	\mathbbm{P}(|\eta_i|>x)\le\mathcal{O}(\exp(-Cd))+ \mathbbm{P}(\sqrt{cd}|W|>x)
 \end{align}
By setting $x = \sqrt{2cC}d^{1-\kappa}/\sqrt{\log d}$, we have
\begin{align}
	\mathbbm{P}(\sqrt{cn}|W|>x)\le \mathcal{O}(\exp(-Cd^{1-2\kappa}/\log d)) 
\end{align}
Then
\begin{align}
		\mathbbm{P}(|\eta_i|>o(d^{1-\kappa}))\le \mathcal{O}(\exp(-Cd^{1-2\kappa}/\log d)) 
\end{align}
Finally, we could reach
\begin{align}
\mathbbm{P}(\min_{i:\beta^*_i>0}\xi_i -|\eta_i| \le c_1d^{\kappa}) \le \mathcal{O}(s\exp(-Cd^{1-2\kappa}/\log d))\notag\\
\Rightarrow \mathbbm{P}(\min_{i: \beta_i^*>0}\omega_i \le c_1d^{\kappa}) \le \mathcal{O}(s\exp(-Cd^{1-2\kappa}/\log d))\notag
\end{align}

Therefore, with probability $1-\mathcal{O}(s\exp(-Cd^{1-2\kappa}/\log d))$, the magnitudes of $\omega_i$ with $\beta_i^*>0$ are uniformly at least of order $d^{1-\kappa}$ and for some $c>0$, we have
\begin{align}
&\sum_{i=1}^n \mathbbm{1}(\omega_k \ge \min_{i:\beta^*_i>0}\omega_i)	\le \frac{cp}{d^{1-2\kappa-\tau}} \le cnd^{2\kappa +\tau-1}\notag\\
\Rightarrow &\sum_{i=1}^n \mathbbm{1}(\bm{q}_l\bm{k}^{\top}_i \ge \min_{i\in\mathcal{M}_*}\bm{q}_l\bm{k}^{\top}_i)	 \le cnd^{2\kappa +\tau-1}\notag.
\end{align}

\end{proof}

%
%

\end{document}